%% file: main.tex
\documentclass{article} 
\usepackage{iclr2025_conference,times}

\usepackage{wrapfig}
\usepackage{lipsum} 
\usepackage{wrapfig}

\input{math_commands.tex}

\usepackage{booktabs}      
\usepackage{multirow}      
\usepackage{amssymb}       

\usepackage{colortbl}
\usepackage{xcolor}

\usepackage{hyperref}
\usepackage{url}
\usepackage{multicol}
\usepackage{aliascnt}
\usepackage{adjustbox}
\usepackage{siunitx} 
\usepackage{multirow} 
\usepackage{enumitem}
\usepackage{booktabs} 
\usepackage{amssymb}  
\usepackage{amsmath}  
\usepackage{thm-restate}
\usepackage{graphicx} 
\usepackage{longtable} 
\usepackage{graphicx}
\usepackage{subcaption}
\usepackage{calc}
\usepackage[most]{tcolorbox}
\usepackage{listings}

\usepackage{bm}
\usepackage{bbm}
\usepackage{microtype}
\usepackage[capitalize,noabbrev]{cleveref}
\usepackage{makecell}
\usepackage{algorithmic}
\usepackage[ruled,vlined]{algorithm2e}
\usepackage{mathtools}
\usepackage{amsthm}
\usepackage{mathrsfs}
\theoremstyle{plain}
\theoremstyle{definition}

\theoremstyle{remark}

\usepackage{thm-restate}

\newtheorem{example}{Example}
\SetKwInput{KwInput}{Input}
\SetKwInput{KwOutput}{Output}
\usepackage{multicol}
\usepackage{pifont}
\usepackage{tcolorbox}
\usepackage{xcolor}
\usepackage{colortbl}
\usepackage{changepage}
\definecolor{myhighlight}{RGB}{220,240,255}

\tcbset{
  example/.style={
    colback=gray!10!white,
    colframe=gray!60!black,
    fonttitle=\bfseries,
    coltitle=white,
    boxrule=1pt,
    arc=4pt,
    left=6pt,
    right=6pt,
    top=6pt,
    bottom=6pt,
    title=Takeaway,
  }
}

\definecolor{uclablue}{rgb}{0.15, 0.45, 0.68}
\hypersetup{
    breaklinks,
    citecolor=uclablue,
    colorlinks=true,
}

\DeclareMathOperator{\clip}{clip}
\DeclareMathOperator{\TV}{TV}

\definecolor{linkColor}{rgb}{0.2,0.4,0.6}
\definecolor{myblue}{HTML}{0379AC}
\definecolor{myred}{HTML}{A50E50}
\definecolor{myorange}{RGB}{238, 133, 74}
\definecolor{latentcolor}{named}{cyan}
\definecolor{normalcolor}{RGB}{0, 0, 0}
\usepackage{marvosym}

\title{Listwise Policy Optimization: \\ Group-based RLVR as Target-Projection on the LLM Response Simplex}

\author{
Yun Qu$^{1,2}$\thanks{\ Work completed during an internship at Tencent.}, 
Qi Wang$^{1,\dagger}$, 
Yixiu Mao$^{1}$, 
Heming Zou$^{1,2,*}$, 
Yuhang Jiang$^1$, 
Yingyue Li$^1$, 
Wutong Xu$^1$, \\ 
Lizhou Cai$^1$, 
Weijie Liu$^{2}$,
Clive Bai$^{2}$, Kai Yang$^{2}$, Yangkun Chen$^{2}$, Saiyong Yang$^{2,\dagger}$, Xiangyang Ji$^{1,}$\thanks{\ Corresponding Authors}\\
\textbf{$^1$Department of Automation, Tsinghua University} \quad \textbf{$^2$LLM Department, Tencent}\\
\Letter~cheemswang@mail.tsinghua.edu.cn,\ stevesyang@tencent.com,\  xyji@tsinghua.edu.cn\\
}

\begin{document}
\maketitle
\let\oldthefootnote\thefootnote

\let\thefootnote\oldthefootnote

\input{sections/abstract}
\input{sections/introduction}

\input{sections/preliminary}
\input{sections/insight}

\input{sections/method}

\input{sections/experiments}

\section{Conclusion}
\label{sec:conclusion}

This work introduces a unified geometric framework for deep insight into group-based RLVR of LLMs. 
We show that existing policy gradient methods act as approximate target-projections on the response simplex and present LPO to perform this projection explicitly. 
LPO benefits from directly optimizing on the simplex, which improves optimization stability and yields monotonic performance improvements.
Moreover, the decoupled target-projection perspective opens up a flexible design space for developing rich and diverse optimization methods for RLVR of LLMs.

\textbf{Limitations and future work.}
Our current formulation primarily focuses on sequence-level projection within outcome reward settings.
Future research will explore step-level listwise projections and investigate broader divergences to fully unlock the potential of the decoupled framework.

\bibliography{iclr2025_conference}
\bibliographystyle{iclr2025_conference}

\appendix
\newpage

\input{sections/appendix}

\end{document}

%% file: math_commands.tex
\usepackage{amsmath,amsfonts,bm}
\usepackage{multicol}

\def\eqref#1{equation~\ref{#1}}

\def\1{\bm{1}}

\DeclareMathAlphabet{\mathsfit}{\encodingdefault}{\sfdefault}{m}{sl}
\SetMathAlphabet{\mathsfit}{bold}{\encodingdefault}{\sfdefault}{bx}{n}

\newcommand{\softmax}{\mathrm{softmax}}



%% file: sections/abstract.tex
\begin{abstract}

Reinforcement learning with verifiable rewards (RLVR) has become a standard approach for large language models (LLMs) post-training to incentivize reasoning capacity.
Among existing recipes, group-based policy gradient is prevalent, which samples a group of responses per prompt and updates the policy via group-relative advantage signals.
This work reveals that these optimization strategies share a common geometric structure: each implicitly defines a target distribution on the response simplex and projects toward it via first-order approximation.
Building on this insight, we propose Listwise Policy Optimization (LPO) to explicitly conduct the target-projection, which 
\textit{demystifies the implicit target} by restricting the proximal RL objective to the response simplex,
and then \textit{projects the policy} via exact divergence minimization.
This framework provides (i) monotonic improvement on the listwise objective with bounded, zero-sum, and self-correcting projection gradients, and (ii) flexibility in divergence selection with distinct structural properties through the decoupled projection step.
On diverse reasoning tasks and LLM backbones, LPO consistently improves training performance over typical policy gradient baselines under matched targets, while intrinsically preserving optimization stability and response diversity.
\end{abstract}

%% file: sections/introduction.tex
\section{Introduction}
\label{sec:intro}

\begin{wrapfigure}{r}{0.35\textwidth}
    \centering
    \vspace{-12pt}
    \includegraphics[width=0.35\textwidth]{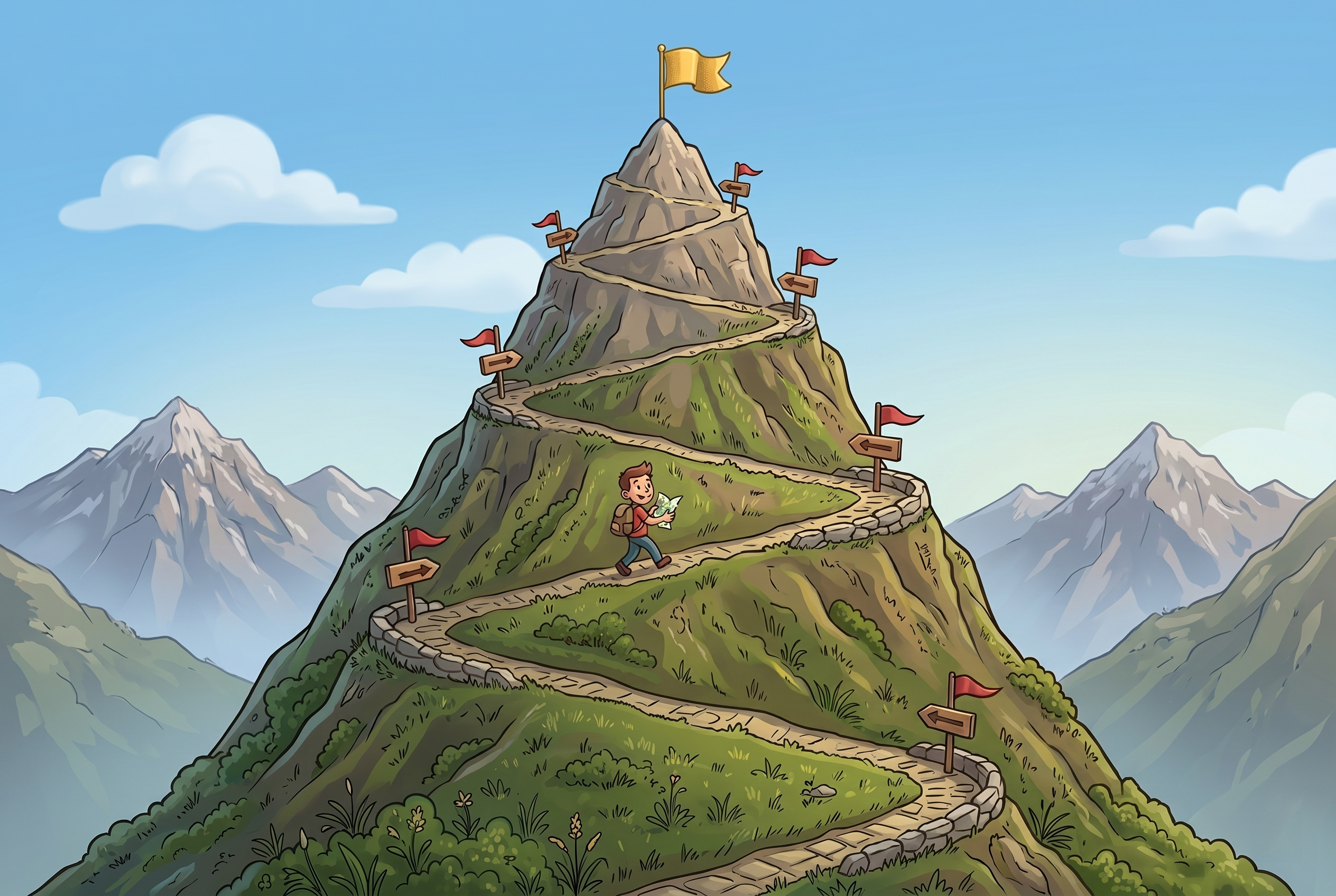}
    \caption{LPO iteratively ascends the reward landscape via explicit target-projection, enabling stable optimization and flexible divergence design.}
    \label{fig:lpo_overview}
    \vspace{-15pt}
\end{wrapfigure}
Recent advances have revealed the prominent potential of reinforcement learning with verifiable rewards~(RLVR) for large language models~(LLMs) post-training, which incentivizes reasoning capabilities on complex problem-solving tasks~\citep{guo2025deepseek,jaech2024openai,luo2025deepcoder}. 
In particular, critic-free, group-based RL paradigms, such as group relative policy optimization~(GRPO)~\citep{shao2024grpo}, have been widely adopted for RLVR. 
This setup samples a group of responses, scores them with a verifier, and performs policy gradient updates using group-relative advantages.
Further extensions in the literature ~\citep{liu2025drgrpo,yu2025dapo,tajwar2026maxrl,hu2025reinforce,chen2025cispo} have introduced critical refinements with special focus on advantage normalization and training stabilization.

\vspace{2pt}
\textbf{Group-based policy gradients as implicit target-projections.} 
Though these empirical refinements have proven effective, viewing them purely through the manner of advantage normalization obscures the intrinsic optimization mechanism.
By defining a listwise distribution~\citep{cao2007listnet,liu2025lipo} jointly over the sampled responses on a simplex,
this work provides a unified geometric perspective on group-based RL algorithms: their advantage formulas implicitly construct a reward-weighted softmax target distribution over the responses, with the target's sharpness configured by the normalization scheme.
Then, the standard policy gradient update acts merely as a first-order approximation of a reverse Kullback-Leibler~(KL)~\citep{kullback1951kullback} projection toward this implicit target. 
This integrated perspective not only elucidates the workings of current methods but also motivates the explicit design of the target-projection mechanism.

\vspace{2pt}
\textbf{From implicit approximation to explicit projection.} 
Explicit target projection has been studied in classical RL \citep{peters2010reps, abdolmaleki2018mpo, peng2019awr}.
However, the existence of continuous action spaces necessitates the use of function approximation.
In contrast, group-based RLVR exhibits a distinct and desirable property: the sampled responses for a prompt naturally form a finite simplex, allowing for the exact computation of both the target distribution and the projection in closed form. 
This makes it feasible to define clear separated goals between \textit{what distribution to target} and \textit{how to project toward it}, facilitating a seamless transition from implicit approximations to exact listwise optimization. 
Consequently, the central research question arises:

\textit{What properties emerge when this target-projection is made explicit, and how does this decoupled optimization space influence RLVR of LLMs?}

\vspace{2pt}
\textbf{Listwise Policy Optimization.}
In response to the above research question, this work develops Listwise Policy Optimization~(LPO) to enable explicit target-projection on the response simplex.
Specifically, LPO 
(i) explicates the implicit target by constraining the proximal RL objective to the sampled responses, yielding a closed-form solution with a controllable temperature,
and (ii) optimizes the policy by projecting it onto the target via divergence minimization on the response simplex.
The exact projection onto the simplex results in gradients that are bounded, zero-sum, and self-correcting by design, which induces variance reduction and stable optimization. 
Furthermore, the decoupled structure allows for flexible projection divergences, and we implement forward and reverse KL divergence as two representative instantiations. 
The resulting iterative target-projection algorithm provides provable monotonic improvement of the listwise reward per iteration.

\vspace{2pt}
\textbf{Contributions.}
This work aims to offer deeper insights into policy optimization in RLVR, focusing on understanding and identifying potential improvements.
The main contribution is two-fold:
\begin{enumerate}
    \item We provide a unifying analytical perspective, revealing that group-based policy gradient methods implicitly perform approximate target-projections on the response simplex.
\vspace{2pt}
    \item We develop LPO, an explicit target-projection framework that decouples listwise target construction from divergence projection, supported by theoretical analysis that proves improvement guarantee and characterizes projections' structural properties.
\end{enumerate}

Extensive evaluations across logic, mathematics, programming, and multi-modal reasoning tasks with diverse LLM backbones demonstrate the effectiveness of LPO: 
(i) LPO achieves higher expected Pass@1 and Pass@k accuracy during training compared to baselines under matched implicit target constructions;
(ii) decoupling the target from the projection accommodates diverse divergences, with a novel forward KL variant showing exceptional competitiveness; and 
(iii) LPO induces highly stable optimization trajectories while inherently preserving response diversity.

%% file: sections/preliminary.tex
\section{Preliminaries}
\label{sec:prelim}

\subsection{Reinforcement Learning with Verifiable Rewards}
\label{sec:problem}

RLVR has emerged as a critical post-training paradigm for incentivizing reasoning capabilities of LLMs~\citep{shao2024grpo, jaech2024openai}.
Let $x$ denote a prompt and $y = (y_1, \ldots, y_{|y|})$ a response of length $|y|$, generated autoregressively by a parameterized policy $\pi_\theta(y \vert x) = \prod_{i=1}^{|y|} \pi_\theta(y_i \vert x, y_{<i})$.
Given a reward function $R(x, y)$ and a reference policy $\pi_{\mathrm{ref}}$, the standard KL-regularized objective for RLVR~\citep{shao2024grpo} is defined as:
\begin{equation}
J_x(\pi_\theta) = \mathbb{E}_{y \sim \pi_\theta(\cdot \vert x)}[R(x,y)] - \beta\,D_{\mathrm{KL}}\bigl(\pi_\theta(\cdot \vert x) \,\Vert\, \pi_{\mathrm{ref}}(\cdot \vert x)\bigr),
\label{eq:V}
\end{equation}
where $\beta \ge 0$ controls the strength of the reference constraint. 
Following recent advances~\citep{yu2025dapo, qu2025mopps}, we primarily focus on rule-based outcome rewards, which are typically binary or sparse ($R \in [0, 1]$), without an explicit reference penalty, i.e., $\beta = 0$.

\subsection{Group-based Policy Gradient}
\label{sec:group_pg}

The dominant paradigm in RLVR is group-based policy gradient~(PG), represented by Group-Relative Policy Optimization~(GRPO)~\citep{shao2024grpo}.
For each prompt $x$, a behavior policy $\pi_b$, which is typically the pre-update snapshot $\pi_{\theta_{\mathrm{old}}}$, generates a group of $K$ responses $\{y_1, \ldots, y_K\}$, each assigned a reward $R_k$ forming the reward vector $R = [R_1, \ldots, R_K]^\top$.
These rewards are converted into group-relative advantages, forming the advantage vector $A = [A_1, \ldots, A_K]^\top$ via centering and scaling. For instance, GRPO uses $A_k = \frac{R_k - \mu_G}{\sigma_G}$, where $\mu_G$ and $\sigma_G$ are the group mean and standard deviation. Table~\ref{tab:implicit_temp} details other common normalization schemes.
The policy is typically updated by maximizing a clipped surrogate objective~\citep{schulman2017ppo,shao2024grpo}:
\begin{equation}
\mathcal{L}_{\mathrm{GRPO}}(\theta) = \mathbb{E}_{x, \{y_k\}_{k=1}^K\sim\pi_b(\cdot\vert x)}\left[\frac{1}{K}\sum_{k=1}^K \frac{1}{|y_k|}\sum_{i=1}^{|y_k|} \min\bigl(r_{k,i}\,A_k,\;\clip(r_{k,i},\,1{-}\epsilon,\,1{+}\epsilon)\,A_k\bigr)\right],
\label{eq:GRPO}
\end{equation}
where $r_{k,i}(\theta) = \frac{\pi_\theta(y_{k,i} \vert x, y_{k,<i})}{\pi_b(y_{k,i} \vert x, y_{k,<i})}$ is the importance ratio and $\epsilon$ is the clipping hyperparameter.

At the exact on-policy point ($\pi_\theta = \pi_{b}$), the importance ratios are identically one ($r_{k,i} = 1$). Consequently, for a fixed prompt $x$, the surrogate objective gradient reduces to the standard sequence-level group-based policy gradient~\citep{sutton1999policy}:
\begin{equation}
g_{\mathrm{PG}} = \frac{1}{K}\sum_{k=1}^K A_k\,\nabla_\theta \log\pi_\theta(y_k \vert x), \quad \text{where } \log\pi_\theta(y_k \vert x) \triangleq \frac{1}{|y_k|}\sum_{i=1}^{|y_k|} \log\pi_\theta(y_{k,i} \vert x, y_{k,<i}).
\label{eq:PG}
\end{equation}

%% file: sections/insight.tex
\section{Group-based Policy Gradient as Implicit Target-Projection}
\label{sec:analysis}

This section reinterprets group-based policy gradients through the lens of the listwise distribution.
We aim to explore: (i) the target distribution that these updates implicitly pursue, and (ii) the impact of different advantage normalization schemes on shaping that target.

\subsection{Listwise Distribution on the Response Simplex}

To formalize, we represent the policy's relative preference over the $K$ sampled responses for prompt $x$ as a listwise distribution $P_\theta$~\citep{cao2007listnet,rafailov2024dpo,liu2025lipo}:
\begin{equation}
P_{\theta,k} = \frac{\exp(s_{\theta,k})}{\sum_{j=1}^K \exp(s_{\theta,j})} = \softmax(s_\theta)_k, \quad \text{with}\ s_{\theta,k} = \log\frac{\pi_\theta(y_k \vert x)}{\pi_b(y_k \vert x)},
\label{eq:P}
\end{equation}
where $P_\theta$ reflects the extent to which $\pi_\theta$ prioritizes each response relative to $\pi_b$.
At the on-policy point ($\pi_\theta = \pi_b$), $P_\theta$ reduces to the uniform distribution $1/K$.
Since $P_{\theta,k} \ge 0$ and $\sum_k P_{\theta,k} = 1$, the vector $P_\theta$ lies on the
probability simplex $\Delta^{K-1} = \{p \in \mathbb{R}^K : p_k \ge 0,\, \sum_k p_k = 1\}$,
which we call the \textbf{\emph{response simplex}}.

\subsection{Group-based Policy Gradient as Approximate Reverse KL}
\label{sec:pg_revkl}
With the listwise distribution, we now reveal the underlying geometric property: standard group-based policy gradients implicitly perform target-projection via reverse Kullback-Leibler~(KL)~\citep{kullback1951kullback} minimization.

\begin{restatable}[Group-based policy gradient as reverse KL at on-policy]{proposition}{propPgRevkl}
\label{prop:pg_revkl}
Let $A \in \mathbb{R}^K$ be a zero-mean advantage vector, i.e., $\sum_{k=1}^K A_k = 0$, and let $w^{\ast} = \softmax(A)$. At the on-policy point ($\pi_\theta=\pi_b$), the policy gradient in Eq.~\ref{eq:PG} equals the negative gradient of the reverse KL divergence $D_{\mathrm{KL}}$:
\begin{equation}
g_{\mathrm{PG}} = \frac{1}{K}\sum_{k=1}^K A_k\,\nabla_\theta\log\pi_\theta(y_k \vert x) = -\nabla_\theta\,D_{\mathrm{KL}}(P_\theta \Vert w^{\ast})\Big|_{\pi_\theta=\pi_b}.
\end{equation}
\end{restatable}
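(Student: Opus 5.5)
The computation is a direct chain-rule argument through the softmax parameterization of Eq.~\ref{eq:P}. Write $D(\theta) = D_{\mathrm{KL}}(P_\theta \Vert w^{\ast}) = \sum_k P_{\theta,k}\log P_{\theta,k} - \sum_k P_{\theta,k}\log w^{\ast}_k$. We first differentiate $D$ with respect to the logits $s_\theta$, and then pass to $\theta$ via $\nabla_\theta s_{\theta,k} = \nabla_\theta \log\pi_\theta(y_k\vert x)$, since $\pi_b$ does not depend on $\theta$. The sequence log-probability here is the length-normalized one of Eq.~\ref{eq:PG}, and the same convention is used in $s_{\theta,k}$.

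For the logit gradient, recall the softmax Jacobian $\partial P_k/\partial s_j = P_k(\delta_{kj} - P_j)$. Substituting $\log P_k = s_k - \log\sum_j e^{s_j}$, the divergence becomes
\[
D = \sum_k P_k\bigl(s_k - \log w^{\ast}_k\bigr) - \log\textstyle\sum_j e^{s_j}.
\]
Differentiating gives
\[
\frac{\partial D}{\partial s_j} = P_j\bigl(s_j - \log w^{\ast}_j\bigr) - P_j\sum_k P_k\bigl(s_k - \log w^{\ast}_k\bigr) = P_j\bigl(\log P_j - \log w^{\ast}_j\bigr) - P_j\, D .
\]
Equivalently, $\nabla_s D = P\odot(\log P - \log w^{\ast}) - P\,D$. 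This is the one step requiring care: the normalizer terms must cancel correctly. We should double-check the identity $\sum_k \partial P_k/\partial s_j \cdot c_k = P_j(c_j - \sum_k P_k c_k)$ together with the entropy term, whose contribution $\sum_k \partial P_k/\partial s_j$ vanishes.

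Next, evaluate at the on-policy point. There $s_{\theta,k} = 0$ for all $k$, so $P_k = 1/K$. Since $w^{\ast} = \softmax(A)$, we have $\log w^{\ast}_k = A_k - \log Z$ with $Z = \sum_j e^{A_j}$. Then
\[
\frac{\partial D}{\partial s_j} = \frac{1}{K}\Bigl(c_j - \frac{1}{K}\sum_k c_k\Bigr), \qquad c_k = -\log K - A_k + \log Z .
\]
The constants $-\log K + \log Z$ cancel under centering, leaving $\frac{1}{K}\bigl(-A_j + \bar A\bigr)$. The zero-mean hypothesis $\sum_k A_k = 0$ gives $\bar A = 0$, so $\partial D/\partial s_j = -A_j/K$. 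This is exactly where the hypothesis enters: without it one would obtain the centered advantage, which is why the statement requires it.

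Finally, apply the chain rule:
\[
-\nabla_\theta D\big|_{\pi_\theta=\pi_b} = -\sum_j \frac{\partial D}{\partial s_j}\,\nabla_\theta\log\pi_\theta(y_j\vert x) = \frac{1}{K}\sum_j A_j\,\nabla_\theta\log\pi_\theta(y_j\vert x) = g_{\mathrm{PG}} .
\]
No step is a genuine obstacle; the only delicate points are the bookkeeping of the log-partition terms and noting that $w^{\ast}$ is treated as a fixed (stop-gradient) target independent of $\theta$.
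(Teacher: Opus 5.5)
Your proposal is correct and follows essentially the same route as the paper. You derive the logit-space gradient $\partial D/\partial s_j = P_j\bigl[\log(P_j/w^{\ast}_j) - D\bigr]$, and your cancellation of the constant $-\log K + \log Z$ under centering is exactly the paper's logit-gap simplification $c_k^{\mathrm{rev}} = P_{\theta,k}(d_k - \bar d)$ evaluated at $P_{\theta,k} = 1/K$, $d_k = -A_k$, $\bar d = 0$, after which the chain rule through $\nabla_\theta s_{\theta,k} = \nabla_\theta \log\pi_\theta(y_k \vert x)$ yields $g_{\mathrm{PG}}$ as in the paper.
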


This observation identifies $w^{\ast}=\softmax(A)$ as the \emph{implicit target} on the response simplex induced by the advantage design.
This equivalence is exact at the on-policy point, but the approximation error grows as the policy drifts from the sampling distribution. Concretely, the per-response coefficient discrepancy scales as $\mathcal{O}(\bar\delta \cdot (1+\|A\|_\infty) / K)$, where $\bar\delta = \max_k |\frac{\pi_\theta(y_k|x)}{\pi_b(y_k|x)} - 1|$ measures the degree of off-policy drift.
See Appendix~\ref{app:proof_pg_revkl} for detailed proof.

\subsection{Implicit Targets of Existing Methods}
\label{sec:adv_targets}

\begin{table}[t]
\centering
\caption{Advantages and implicit targets of existing group-based policy gradient methods.}
\vspace{-3pt}
\label{tab:implicit_temp}
\resizebox{\linewidth}{!}{
\begin{tabular}{lccc}
\toprule
Algorithm & Advantage $A_k$ & Implicit target $w^{\ast}$ & Temperature $\tau$ \\
\midrule
Dr.GRPO~\citep{liu2025drgrpo} / RLOO~\citep{ahmadian2024rloo} & $R_k - \mu_G$ & $\softmax(R)$ & $\sim1$ \\
GRPO~\citep{shao2024grpo} / DAPO~\citep{yu2025dapo} & $(R_k - \mu_G)/\sigma_G$ & $\softmax(R/\sigma_G)$ & $\sigma_G$ \\
MaxRL~\citep{tajwar2026maxrl} & $(R_k - \mu_G)/\mu_G$ & $\softmax(R/\mu_G)$ & $\mu_G$ \\
\bottomrule
\end{tabular}
}
\vspace{-10pt}
\end{table}

Table~\ref{tab:implicit_temp} summarizes the specific implicit targets induced by existing group-based PG algorithms.
Advantages in these methods take the form $A_k = (R_k - \mu)/\tau$ for various choices of centering $\mu$ and scaling $\tau$.
By the shift-invariance of softmax, the centering cancels and the target $w^{\ast}$ reduces to $\softmax(R/\tau)$, where $\tau$ acts as a temperature.
Different normalization schemes thus preserve the same reward ordering with the main difference in target sharpness, as detailed in Appendix~\ref{app:dapo}.

\textbf{From approximation to exact projection.}
This unifying view also suggests a natural refinement.
Since both the target $w^{\ast}$ and the listwise distribution $P_\theta$ lie on the finite response simplex, the projection can be performed in an exact manner.
Moreover, it provides a new lens on algorithm design worth investigating: exact projection allows for any statistical divergence, e.g., Forward KL, that were inaccessible under the current policy gradient paradigm.
Accordingly, the next section will develop a generalized framework.

%% file: sections/method.tex
\section{Listwise Policy Optimization}
\label{sec:lpo}
We now replace implicit policy gradient approximations with an explicit target-projection framework on the response simplex.
This framework decouples each iteration into two entangled steps:
\begin{equation}
\underbrace{w^{\ast} = \arg\max_{w \in \Delta^{K-1}}\; \hat{J}(w)}_{\text{(i) Target: \emph{what} distribution to aim for}}
\qquad\qquad
\underbrace{\theta' = \arg\min_{\theta}\; D\!\left(w^{\ast}\,\Vert\, P_\theta\right)}_{\text{(ii) Projection: \emph{how} to optimize toward it}}
\label{eq:tp}
\end{equation}
where $\hat{J}$ is a proximal objective on the simplex and $D$ is a divergence measure.
Next, we will detail the optimization steps, their implementation, and the theoretical analysis.

\subsection{Target Induced on the Response Simplex}
\label{sec:proximal}

To demystify the principled origin of the implicit target in group-based policy gradients,
we define a local proximal RL objective per prompt on the response simplex, which maximizes the expected reward subject to a trust region around the policy~\citep{schulman2017trpo}:
\begin{equation}
\max_{w \in \Delta^{K-1}} \hat{J}(w) = \sum_{k=1}^K w_k R_k 
- \tau\,D_{\mathrm{KL}}(w \Vert P_t),
\label{eq:J}
\end{equation}
where $P_t = \softmax(s_t)$ is the listwise distribution induced by the pre-update policy $\pi_t$, 
with $s_{t,k} = \log\bigl(\pi_t(y_k \vert x)/\pi_b(y_k \vert x)\bigr)$.
Equivalently, $P_t$ is $P_\theta$ from Eq.~\ref{eq:P} evaluated at $\theta = \theta_t$. 
Both $P_t$ and $s_t$ are held fixed while $\theta$ is updated.

\begin{restatable}[Listwise Gibbs target]{theorem}{thmGibbs}
\label{thm:gibbs}
The objective $\hat{J}(w)$ in Eq. (\ref{eq:J}) has a unique maximizer $w^\ast$:
\begin{equation}
w_k^{\ast} = \softmax(\phi)_k, \quad\text{with}\ \ \phi_k = \frac{R_k}{\tau} + s_{t,k}.
\label{eq:W}
\end{equation}
\end{restatable}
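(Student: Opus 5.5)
The plan is to rewrite $\hat{J}$ as a negative KL divergence to a Gibbs distribution, plus a constant. This gives the maximizer and its uniqueness together, through Gibbs' inequality. Assume $\tau>0$. Since $P_t=\softmax(s_t)$ has all coordinates strictly positive, $D_{\mathrm{KL}}(w\Vert P_t)$ is finite for every $w\in\Delta^{K-1}$, with the convention $0\log 0=0$. Hence $\hat{J}$ is a well-defined, continuous function on the compact simplex.

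\textbf{Step 1: rewrite the objective.} Write $\log P_{t,k}=s_{t,k}-\log Z_s$ with $Z_s=\sum_j e^{s_{t,j}}$. Then
\begin{equation*}
\hat{J}(w)=\sum_k w_k R_k-\tau\sum_k w_k\log w_k+\tau\sum_k w_k s_{t,k}-\tau\log Z_s
=\tau\sum_k w_k\phi_k-\tau\sum_k w_k\log w_k-\tau\log Z_s ,
\end{equation*}
with $\phi_k=R_k/\tau+s_{t,k}$. Now set $w^\ast=\softmax(\phi)$ and $Z_\phi=\sum_j e^{\phi_j}$. Using $\phi_k=\log w^\ast_k+\log Z_\phi$ and $\sum_k w_k=1$, this becomes
\begin{equation*}
\hat{J}(w)=-\tau\,D_{\mathrm{KL}}(w\Vert w^\ast)+\tau\bigl(\log Z_\phi-\log Z_s\bigr).
\end{equation*}

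\textbf{Step 2: conclude.} The last term does not depend on $w$. Gibbs' inequality gives $D_{\mathrm{KL}}(w\Vert w^\ast)\ge 0$, with equality if and only if $w=w^\ast$; this uses that $w^\ast$ has full support. Therefore $w^\ast$ is the unique maximizer, and the optimal value is $\tau\log(Z_\phi/Z_s)$.

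As a cross-check, one can instead attach a Lagrange multiplier for $\sum_k w_k=1$ to the strictly concave objective. Strict concavity holds because the negative entropy is strictly convex and $\tau>0$. Stationarity gives $R_k-\tau(\log w_k+1-\log P_{t,k})-\lambda=0$, so $w_k\propto P_{t,k}e^{R_k/\tau}\propto e^{\phi_k}$. One must also rule out boundary maximizers: the derivative of $-w_k\log w_k$ tends to $+\infty$ as $w_k\to 0$, so no maximizer has a zero coordinate. The KL rewriting of Step 1 avoids this boundary issue altogether, so I would use it as the main proof and keep the Lagrangian computation only as a remark.

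The only delicate point is bookkeeping: the normalizer $Z_s$ of $P_t$ must be handled so that it drops out as a constant, and the proof relies on $\tau>0$ and on the strict positivity of $P_t$ and $w^\ast$. I would state $\tau>0$ explicitly, since for $\tau=0$ the maximizer is not unique when rewards tie.
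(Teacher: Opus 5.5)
Your proof is correct, but your main argument takes a different route from the paper's.

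The paper works through a Lagrangian for $\sum_k w_k = 1$. Stationarity gives $w_k \propto P_{t,k}\exp(R_k/\tau)$, and shift-invariance of the softmax turns $\log P_t$ into $s_t$. Uniqueness is then read off from strict concavity of $\hat J$. That is essentially your cross-check remark.

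You instead complete the divergence, writing $\hat J(w) = -\tau D_{\mathrm{KL}}(w\Vert w^\ast) + \tau\log(Z_\phi/Z_s)$, and then apply Gibbs' inequality. This gives existence, uniqueness and the optimal value in one step. It holds on the whole closed simplex, so you never need to discuss boundary points, and it uses no convexity at all.

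Your identity is exactly the paper's Proposition~\ref{prop:proximal_revkl}, since $\hat Z = \sum_j P_{t,j}e^{R_j/\tau} = Z_\phi/Z_s$. The paper derives that identity \emph{after} and \emph{from} this theorem, and then uses it for Theorem~\ref{thm:monotonic}. Your ordering proves both results at once.

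The Lagrangian route has its own advantage: it discovers the Gibbs form rather than requiring you to guess $w^\ast$ in advance. It also carries over to regularizers for which no closed-form completion exists.

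Your boundary remark is not strictly needed on the paper's route. For a concave objective, a relative-interior point satisfying the Lagrange condition is automatically a global maximizer over the simplex. The paper leaves that sufficiency step implicit.
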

Theorem \ref{thm:gibbs} indicates that the target $w^{\ast}$ re-weights the baseline $P_t$ toward high-reward responses, with $\tau$ controlling the sharpness: $w^{\ast} \to \arg\max_k R_k$ as $\tau \to 0$, and $w^{\ast} \to P_t$ as $\tau \to \infty$.
Under the on-policy setup ($\pi_t = \pi_b$), $P_t$ degenerates to a uniform distribution and $w^{\ast} = \softmax(R/\tau)$ \textit{recovers} the implicit targets of existing methods (Proposition~\ref{prop:pg_revkl}), 
with $\tau$ now an explicit design parameter with trust-region interpretation rather than a byproduct of advantage normalization.
Notably, concurrent works~\citep{kaddour2026tpo, shu2026reference} explore a similar paradigm by explicitly formulating the target as a reward-based Gibbs distribution with varied reference policies.

As $K \to \infty$, the empirical response simplex approximates the full policy space, and Eq.~\ref{eq:J} recovers the KL-regularized RL objective 
$\max_{w} \mathbb{E}_{w}[R] - \tau D_{\mathrm{KL}}(w \Vert \pi_t)$~\citep{ziebart2010maxent,levine2018rl}, 
whose solution is $w^{\ast} \propto \pi_t(y)\exp(R(y)/\tau)$ with an intractable partition function.
Operating on a finite response simplex yields a tractable formulation and makes the implicit target explicit.

\textbf{Monotonic improvement guarantee.}
\label{sec:monotonic}
The proximal objective $\hat{J}(w)$ serves as a surrogate to the listwise reward $\hat{R}(w) = \sum_k w_k R_k$, 
establishing a performance improvement bound:

\begin{restatable}[Performance improvement bound]{theorem}{thmMonotonic}
\label{thm:monotonic}
Assume $|R_k| \le R_{\max}$. If the projection step achieves $\mathrm{TV}(P_{t+1}, w^{\ast}) \le \epsilon_{\mathrm{proj}}$, then
\begin{equation}
\hat{R}(P_{t+1}) \ge \hat{R}(P_t) + \underbrace{\tau\bigl[D_{\mathrm{KL}}(w^{\ast} \Vert P_t) + D_{\mathrm{KL}}(P_t \Vert w^{\ast})\bigr]}_{\text{target gain}\;\ge\;0} - \underbrace{2R_{\max}\epsilon_{\mathrm{proj}}}_{\text{projection error}}.
\end{equation}
\end{restatable}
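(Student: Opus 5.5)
The bound splits into an exact identity for the target gain, $\hat R(w^{\ast}) - \hat R(P_t)$, followed by a perturbation step that controls $\hat R(P_{t+1}) - \hat R(w^{\ast})$ through the projection error. Write
\[
\hat R(P_{t+1}) - \hat R(P_t) = \bigl[\hat R(w^{\ast}) - \hat R(P_t)\bigr] + \bigl[\hat R(P_{t+1}) - \hat R(w^{\ast})\bigr].
\]

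\textbf{Target gain.} By Theorem~\ref{thm:gibbs}, $w^{\ast}_k = P_{t,k}\exp(R_k/\tau)/Z$ with $Z = \sum_j P_{t,j}\exp(R_j/\tau)$, since $\softmax(R/\tau + s_t) \propto \softmax(s_t)\, e^{R/\tau}$. Hence
\[
\log\frac{w^{\ast}_k}{P_{t,k}} = \frac{R_k}{\tau} - \log Z .
\]
Averaging this log-ratio under $w^{\ast}$ and under $P_t$ gives
\[
D_{\mathrm{KL}}(w^{\ast}\Vert P_t) = \frac{\hat R(w^{\ast})}{\tau} - \log Z,
\qquad
-D_{\mathrm{KL}}(P_t\Vert w^{\ast}) = \frac{\hat R(P_t)}{\tau} - \log Z .
\]
Subtracting the second from the first cancels $\log Z$ and yields the identity
\[
\hat R(w^{\ast}) - \hat R(P_t) = \tau\bigl[D_{\mathrm{KL}}(w^{\ast}\Vert P_t) + D_{\mathrm{KL}}(P_t\Vert w^{\ast})\bigr].
\]
This gain is nonnegative because it is a symmetrized KL. All KL terms are finite, since $P_t$ and $w^{\ast}$ have full support as softmaxes of finite vectors.

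\textbf{Projection error.} For any $p, q \in \Delta^{K-1}$,
\[
|\hat R(p) - \hat R(q)| = \Bigl|\sum_k (p_k - q_k)R_k\Bigr| \le R_{\max}\,\|p-q\|_1 = 2R_{\max}\,\mathrm{TV}(p,q),
\]
using $\mathrm{TV} = \tfrac12\|\cdot\|_1$. With $p = P_{t+1}$, $q = w^{\ast}$, and the hypothesis $\mathrm{TV}(P_{t+1}, w^{\ast}) \le \epsilon_{\mathrm{proj}}$, this gives $\hat R(P_{t+1}) \ge \hat R(w^{\ast}) - 2R_{\max}\epsilon_{\mathrm{proj}}$. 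Combining with the target-gain identity proves the theorem.

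\textbf{Conventions to fix.} The sharper bound $R_{\max}\|p-q\|_1$ via the zero-sum of $p-q$ is not needed, since the stated constant $2R_{\max}$ matches the crude bound above. The main care point is fixing what $P_{t+1}$ means. I take $P_{t+1}$ and $P_t$ as listwise distributions over the same sampled group and with the same $\pi_b$, so that $\hat R$ is evaluated consistently on the same simplex. With that convention the argument has no real obstacle beyond the log-partition cancellation.
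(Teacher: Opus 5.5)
Your proof is correct and is essentially the paper's argument. The paper gets the Jeffreys-divergence identity by evaluating $\hat J(w) = -\tau D_{\mathrm{KL}}(w\Vert w^{\ast}) + \tau\log\hat Z$ (Proposition~\ref{prop:proximal_revkl}) at $P_t$ and at $w^{\ast}$, which is the same computation as your averaging of $\log(w^{\ast}_k/P_{t,k}) = R_k/\tau - \log Z$ under $P_t$ and $w^{\ast}$; it then bounds the projection error with the same H\"older/$\ell_1$ estimate. (One small slip in your side remark: the zero-sum refinement would give $\tfrac12(\max_k R_k - \min_k R_k)\|p-q\|_1$, not $R_{\max}\|p-q\|_1$, but the proof does not use it.)
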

The target gain in Theorem \ref{thm:monotonic} is the Jeffreys divergence~\citep{jeffreys1946}.
With perfect projection, i.e., $\epsilon_{\mathrm{proj}} = 0$, the reward strictly improves whenever $P_t \ne w^{\ast}$.
In the idealized full policy space, iterating the exact proximal update converges to the reward-maximizing policy, providing a limiting reference for the target-projection framework.
See Appendix~\ref{app:proof_monotonic} and Appendix~\ref{app:proof_optimality} for proofs.
\begin{restatable}[Idealized full-space convergence]{proposition}{propOptimality}
\label{prop:optimality}
Let $\pi_0(y) > 0$ for all $y$, and assume $R(y)$ is bounded. 
Under exact proximal updates 
$\pi_{t+1}(y) \propto \pi_t(y)\exp(R(y)/\tau)$, 
the iteration satisfies 
$\pi_t(y) \propto \pi_0(y)\exp(tR(y)/\tau)$ and 
$\mathbb{E}_{\pi_t}[R] \to \max_y R(y)$ as $t \to \infty$.
\end{restatable}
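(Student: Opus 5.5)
The plan is to argue by induction first, and only then pass to the limit. If $\pi_t(y) = \pi_0(y)\exp(tR(y)/\tau)/Z_t$ with $Z_t = \sum_y \pi_0(y)\exp(tR(y)/\tau)$, then the update $\pi_{t+1}(y) \propto \pi_t(y)\exp(R(y)/\tau)$ gives $\pi_{t+1}(y) \propto \pi_0(y)\exp((t+1)R(y)/\tau)$. The normalizer $Z_t$ is finite because $R$ is bounded, so $\exp(tR/\tau)\le \exp(tR_{\max}/\tau)$ and $\sum_y\pi_0(y)=1$; this makes every iterate well defined. The base case $t=0$ is trivial. Positivity $\pi_t(y)>0$ is preserved along the way.

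For the limit, write $R^\ast = \sup_y R(y)$, which is finite by boundedness. If the response space is finite, or if the supremum is attained, this is the $\max_y R(y)$ in the statement. Set $\beta_t = t/\tau \to \infty$, so that $\pi_t$ is the Gibbs tilt $\pi_0 e^{\beta_t R}/Z(\beta_t)$. The plan is to show that for every $\delta>0$ the mass $\pi_t(\{y: R(y) < R^\ast-\delta\})$ goes to zero. Pick the good set $G_{\delta} = \{y : R(y) \ge R^\ast - \delta/2\}$. It is nonempty by the definition of the supremum, and $\pi_0(G_\delta) =: c_\delta > 0$ because $\pi_0$ is positive everywhere. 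Then
\[
\pi_t(\{R<R^\ast-\delta\}) \le \frac{e^{\beta_t(R^\ast-\delta)}}{c_\delta\, e^{\beta_t(R^\ast-\delta/2)}} = \frac{e^{-\beta_t\delta/2}}{c_\delta} \to 0 .
\]
With this in hand,
\[
\mathbb{E}_{\pi_t}[R] \ge (R^\ast-\delta)\bigl(1-\pi_t(\text{bad})\bigr) - R_{\max}\,\pi_t(\text{bad}),
\]
and $\mathbb{E}_{\pi_t}[R]\le R^\ast$ always. Letting $t\to\infty$ and then $\delta\to 0$ gives $\mathbb{E}_{\pi_t}[R]\to R^\ast$.

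An alternative, smoother route is to note that $\frac{d}{d\beta}\mathbb{E}_\beta[R] = \mathrm{Var}_\beta(R) \ge 0$, which gives monotonicity. This is consistent with the monotone improvement of Theorem~\ref{thm:monotonic}. However, monotonicity alone does not identify the limit, so I would rely on the tail-mass argument above.

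The main obstacle is the countably infinite response space. There we need $\pi_0(G_\delta)>0$, which is exactly why the assumption $\pi_0(y)>0$ for all $y$ is imposed. We also need the normalization to be finite, which is why bounded $R$ is assumed. In the finite case the argument collapses to $\pi_t$ concentrating on $\arg\max_y R(y)$. I would state the conclusion with $\sup$ and remark that it equals $\max$ whenever the supremum is attained.
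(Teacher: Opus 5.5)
Your proof is correct and reaches the result by a different route from the paper in the limit step. The first half, the induction giving $\pi_t \propto \pi_0 e^{tR/\tau}$, is the same as the paper's.

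For the limit, the paper argues pairwise. For any $y_1,y_2$ with $R(y_1)>R(y_2)$, the odds $\pi_t(y_1)/\pi_t(y_2)=\frac{\pi_0(y_1)}{\pi_0(y_2)}e^{t(R(y_1)-R(y_2))/\tau}$ diverge, and the paper concludes directly that mass concentrates on $\arg\max_y R(y)$. This is immediate and fully rigorous on a finite response space, since there are finitely many suboptimal responses and each one's mass relative to a maximizer vanishes. On an infinite space, however, pointwise divergence of ratios does not by itself control the \emph{total} mass of suboptimal responses; an extra summability or dominated-convergence step is needed. The paper's argument also implicitly takes the maximum to be attained, and it never actually uses the boundedness of $R$ (not even to check that the normalizer is finite).

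Your route instead compares the whole set $\{R<R^\ast-\delta\}$ against a good set $G_\delta$ of positive prior mass. This buys several things:
\begin{itemize}
\item a uniform bound $e^{-\beta_t\delta/2}/c_\delta$ with an explicit exponential rate;
\item coverage of countably infinite response spaces and of a supremum that is not attained;
\item use of boundedness exactly where it is needed, namely finiteness of $Z_t$ and the $-R_{\max}\,\pi_t(\text{bad})$ term.
\end{itemize}
The only thing it gives up is the sharper finite-space statement that $\pi_t$ concentrates on the argmax itself. As you note, your argument recovers that in the finite case, and it is not needed for the convergence of $\mathbb{E}_{\pi_t}[R]$.
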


\begin{figure}
    \centering
    \includegraphics[width=0.95\linewidth]{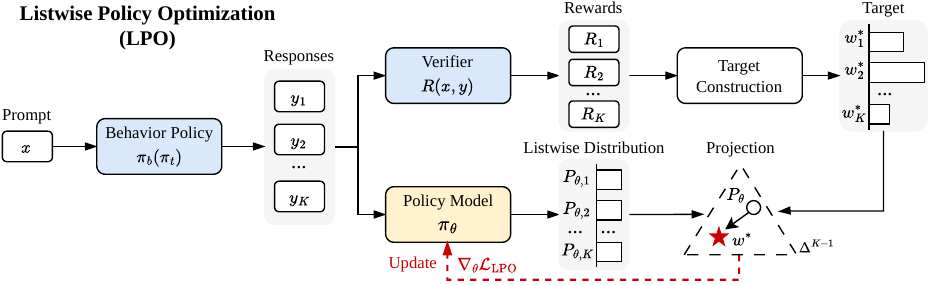}
    \vspace{-5pt}
    \caption{Illustration of LPO, which performs explicit target projection on the LLM response simplex, in contrast to the implicit approximations of group-based policy gradient methods.}
    \vspace{-5pt}
    \label{fig:lpo}
\end{figure}

\subsection{Projection for Policy Optimization}
\label{sec:projection}

With both the target $w^{\ast}$ in Eq.~\ref{eq:W} and the listwise distribution $P_\theta$ in Eq.~\ref{eq:P}
on $\Delta^{K-1}$, 
policy optimization reduces to a projection under a chosen divergence.
As representative choices, we develop the forward and reverse KL versions, with full derivations in Appendix~\ref{app:kl_gradients}.

\begin{example}[\textbf{Forward KL}]
    Minimizing the forward KL divergence $D_{\mathrm{KL}}(w^{\ast} \Vert P_\theta)$ gives:
    \vspace{-3pt}
\begin{equation}
\min \mathcal{L}_{\mathrm{LPO_{fwd}}}=D_{\mathrm{KL}}(w^{\ast} \Vert P_\theta)\Rightarrow \nabla_\theta\,\mathcal{L}_{\mathrm{LPO_{fwd}}} = \sum_{k=1}^K \underbrace{\bigl(P_{\theta,k} - w_k^{\ast}\bigr)}_{c_k^{\mathrm{fwd}}}\,\nabla_\theta\log\pi_\theta(y_k \vert x).
\label{eq:G}
\end{equation}
\vspace{-10pt}
\end{example}

The coefficient $c_k^{\mathrm{fwd}}$ measures the probability gap between the current policy and the target.
Although similar projection objectives exist in prior methods~\citep{abdolmaleki2018mpo,peng2019awr,shu2026reference}, they are implemented in a pointwise manner, treating each response independently without relative comparison.
In contrast, LPO performs projection on the response simplex via shared normalization, which couples across responses.
Furthermore, this yields the following desirable properties:

\begin{restatable}[Gradient coefficient properties]{corollary}{corBounded}
\label{cor:bounded}
The forward KL gradient coefficients $c_k^{\mathrm{fwd}}$ satisfy: (a) bounded: $|c_k^{\mathrm{fwd}}| \le 1$; (b) zero-sum: $\sum_k c_k^{\mathrm{fwd}} = 0$; (c) self-correcting: $c_k^{\mathrm{fwd}} \to 0$ as $P_\theta \to w^{\ast}$.
\end{restatable}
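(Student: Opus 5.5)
The plan rests on the fact that $c_k^{\mathrm{fwd}} = P_{\theta,k} - w_k^{\ast}$ is a difference of two points on the response simplex $\Delta^{K-1}$. I take the form of the coefficient from Eq.~\ref{eq:G} as given. For completeness, it comes from writing $D_{\mathrm{KL}}(w^{\ast}\Vert P_\theta) = \sum_k w_k^{\ast}\log w_k^{\ast} - \sum_k w_k^{\ast} s_{\theta,k} + \log\sum_j \exp(s_{\theta,j})$ and differentiating in $s_{\theta,k}$. This gives $P_{\theta,k} - w_k^{\ast}$, with $\nabla_\theta s_{\theta,k} = \nabla_\theta \log\pi_\theta(y_k\vert x)$ because $\pi_b$ and $w^{\ast}$ are held fixed. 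Each of the three properties then follows from elementary facts about probability vectors, so I will prove them in the order stated.

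For (a), I use that $P_\theta = \softmax(s_\theta)$ and $w^{\ast} = \softmax(\phi)$ (Theorem~\ref{thm:gibbs}), so $P_{\theta,k}, w_k^{\ast} \in [0,1]$ for every $k$. The difference of two numbers in $[0,1]$ lies in $[-1,1]$, hence $|c_k^{\mathrm{fwd}}| \le 1$. Since softmax entries are strictly positive, the bound is in fact strict. I would also note the stronger statement $\sum_k |c_k^{\mathrm{fwd}}| = 2\,\TV(P_\theta, w^{\ast}) \le 2$, which links this corollary to the projection error in Theorem~\ref{thm:monotonic}.

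For (b), summing gives $\sum_k c_k^{\mathrm{fwd}} = \sum_k P_{\theta,k} - \sum_k w_k^{\ast} = 1 - 1 = 0$, since both vectors are normalized by construction. For (c), each $c_k^{\mathrm{fwd}}$ is a continuous (linear) function of the pair $(P_\theta, w^{\ast})$, and the bound $|c_k^{\mathrm{fwd}}| \le \|P_\theta - w^{\ast}\|_\infty$ shows $c_k^{\mathrm{fwd}} \to 0$ as $P_\theta \to w^{\ast}$ in any norm on $\mathbb{R}^K$. I would add that the gradient in Eq.~\ref{eq:G} vanishes exactly when $P_\theta = w^{\ast}$, which is the global minimizer of the forward KL.

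None of these steps is a real obstacle. The only point needing care is making explicit that $w^{\ast}$ and $P_t$ are treated as constants during projection, so that the coefficient is genuinely $P_{\theta,k} - w_k^{\ast}$ with no extra terms.
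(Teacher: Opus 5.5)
Your proposal is correct and follows essentially the same route as the paper: (a) from $P_{\theta,k}, w_k^{\ast}\in[0,1]$, (b) from both vectors summing to one, and (c) directly from the definition; the paper also records your aggregate bound $\sum_k |c_k^{\mathrm{fwd}}| \le 2$, which it turns into $\|\nabla_\theta \mathcal{L}_{\mathrm{LPO_{fwd}}}\| \le 2G_{\max}$. One small correction to your side remark: it is the coefficients $c_k^{\mathrm{fwd}}$ that vanish exactly when $P_\theta = w^{\ast}$, whereas the parameter-space gradient in Eq.~\ref{eq:G} can also vanish at other points if the vectors $\nabla_\theta\log\pi_\theta(y_k\vert x)$ are linearly dependent.
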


\begin{restatable}[Mode-Coverage]{corollary}{corModeCoverage}
\label{cor:mode_coverage}
If $w_k^{\ast} \ge \alpha$ and $D_{\mathrm{KL}}(w^{\ast} \Vert P_\theta) \le D$, then $P_{\theta,k} > \alpha\exp\left(-D/\alpha-1\right)$.
\end{restatable}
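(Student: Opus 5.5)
The plan is to isolate the $k$-th term of the forward KL and lower bound the rest with the log-sum inequality. This leaves a one-variable inequality on $w_k\log(w_k/P_{\theta,k})$, which we invert for $P_{\theta,k}$. Throughout, $P_{\theta,k}>0$ for all $k$, since $P_\theta=\softmax(s_\theta)$ in Eq.~\ref{eq:P}. Write $w_k=w^{\ast}_k$ and $p=P_{\theta,k}$. We split
\begin{equation*}
D_{\mathrm{KL}}(w^{\ast}\Vert P_\theta) = w_k\log\frac{w_k}{p} + \sum_{j\neq k} w^{\ast}_j\log\frac{w^{\ast}_j}{P_{\theta,j}} .
\end{equation*}

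\textbf{Step 1 (remainder bound).} By the log-sum inequality, and using $\sum_{j\ne k}w^{\ast}_j = 1-w_k$ and $\sum_{j\ne k}P_{\theta,j}=1-p$, the remainder is at least
\begin{equation*}
(1-w_k)\log\frac{1-w_k}{1-p}.
\end{equation*}
Since $0<p$, we have $1-p<1$, so this is at least $(1-w_k)\log(1-w_k)$, with the usual convention $0\log 0=0$.

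\textbf{Step 2 (elementary inequality).} The naive bound $x\log x\ge -1/e$ would introduce a constant that does not scale with $\alpha$. It gives the wrong form when $\alpha<1/e$, so I expect choosing the right bound here to be the only real obstacle. Instead I will use $\log(1-w)\ge -w/(1-w)$ for $w\in[0,1)$, which gives $(1-w_k)\log(1-w_k)\ge -w_k$. This also holds trivially at $w_k=1$. Combining with Step 1 and the hypothesis $D_{\mathrm{KL}}\le D$ yields
\begin{equation*}
w_k\log\frac{w_k}{p}\le D+w_k,
\qquad\text{hence}\qquad
\log\frac{w_k}{p}\le \frac{D}{w_k}+1 .
\end{equation*}

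\textbf{Step 3 (monotonicity in $w_k$).} From Step 2, $p\ge w_k\exp(-D/w_k-1)$. The map $w\mapsto w\exp(-D/w-1)$ is increasing on $(0,1]$ because $D\ge0$. Since $w_k\ge\alpha$, we get $p\ge \alpha\exp(-D/\alpha-1)$.

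\textbf{Strictness.} The inequality in Step 1 is strict whenever $w_k<1$, because $\log\frac{1-w_k}{1-p}>\log(1-w_k)$ when $p>0$. When $w_k=1$, the remainder term equals $0>-1=-w_k$, so Step 2 is strict instead. Either way the final inequality is strict, which gives $P_{\theta,k}>\alpha\exp(-D/\alpha-1)$ as claimed in Corollary~\ref{cor:mode_coverage}.
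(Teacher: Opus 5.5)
Your proof is correct and follows essentially the same route as the paper. Both arguments reduce to the binary event $\{k\}$ versus its complement (your log-sum step is exactly the paper's data-processing step), drop the nonnegative $-(1-w_k)\log(1-p)$ term, and finish with the inequality $(1-w)\log(1-w)\ge -w$ (equivalently $(1-\alpha)^{(1-\alpha)/\alpha}\ge e^{-1}$) plus monotonicity in $w_k$. The differences are minor improvements: applying that inequality before the monotonicity step means you only need the obvious monotonicity of $w\mapsto w\exp(-D/w-1)$, rather than the paper's unproved claim that $x\exp(-D/x)(1-x)^{(1-x)/x}$ is increasing; and you explicitly justify the strict inequality in the corollary, whereas the paper's proof only concludes with $\ge$.
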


The zero-sum property acts as a built-in control variate for variance reduction~\citep{sutton1988learning}.
The bounded and self-correcting properties further improve optimization stability.
Moreover, Corollary~\ref{cor:mode_coverage} provides a log-barrier against mode collapse, ensuring response diversity.
Recently, \citet{kaddour2026tpo} adopts a very similar listwise forward KL projection, empirically corroborating its practical efficacy.

\begin{example}[\textbf{Reverse KL}]
    Minimizing the reverse KL divergence $D_{\mathrm{KL}}(P_\theta \Vert w^{\ast})$, with logit gap $d_k = s_{\theta,k} - \phi_k$ (the difference between the current policy and the target) and its $P_\theta$-weighted mean $\bar{d} = \sum_j P_{\theta,j}\, d_j$,  yields the following gradient:
    \vspace{-8pt}
\begin{equation}
\min \mathcal{L}_{\mathrm{LPO_{rev}}}=D_{\mathrm{KL}}(P_\theta \Vert w^{\ast})\Rightarrow \nabla_\theta\,\mathcal{L}_{\mathrm{LPO_{rev}}} = \sum_{k=1}^K \underbrace{P_{\theta,k}\bigl(d_k - \bar{d}\bigr)}_{c_k^{\mathrm{rev}}}\,\nabla_\theta\log\pi_\theta(y_k \vert x).
\label{eq:Rprime}
\end{equation}
\vspace{-10pt}
\end{example}

Similar to the forward KL, the gradient coefficient $c_k^{\mathrm{rev}}$ is zero-sum and self-correcting.
Minimizing reverse KL is equivalent to maximizing the proximal objective $\hat{J}$ (See Proposition~\ref{prop:proximal_revkl}), and it decomposes as $H(P_\theta) + \sum_k P_{\theta,k}\, \phi_k$: revealing an implicit entropy bonus (Appendix~\ref{app:entropy}).
At the on-policy point, the gradient of this objective exactly recovers the standard policy gradient (Proposition~\ref{prop:pg_revkl}).

\begin{algorithm}[t]
\caption{Listwise Policy Optimization (LPO)}
\label{alg:lpo}
\begin{algorithmic}[1]
\REQUIRE Policy parameters $\theta$, temperature $\tau > 0$, batch size $B$, inner epochs $E$, step size $\eta$
\FOR{each training iteration}
  \STATE Set behavior policy $\pi_b \leftarrow \pi_\theta$, pre-update policy $\pi_t \leftarrow \pi_\theta$
  \STATE Sample a batch of prompts $\mathcal{B} = \{x_i\}_{i=1}^B$
  \STATE For each $x \in \mathcal{B}$, generate responses $\{y_{k}\}_{k=1}^K \sim \pi_b(\cdot \vert x)$ and compute rewards $\{R_k\}_{k=1}^K$ 
  \STATE \textbf{Compute Target:} $w^{\ast}(x) = \softmax(\phi(x))$ via Eq.~\ref{eq:W} for all $x \in \mathcal{B}$
  \FOR{epoch $e = 1$ \TO $E$}
    \STATE \textbf{Compute Coefficients:} $c_k(x)$ via Eq.~\ref{eq:G} (forward KL) or Eq.~\ref{eq:Rprime} (reverse KL)
    \STATE \textbf{Gradient Update:} $\theta \leftarrow \theta - \eta \frac{1}{B} \sum_{x \in \mathcal{B}} \sum_{k=1}^K c_k(x) \nabla_\theta \log \pi_\theta(y_{k} \vert x)$
  \ENDFOR
\ENDFOR
\end{algorithmic}
\end{algorithm}

\subsection{Practical Implementation}
\label{sec:practical}

The overall LPO procedure is summarized in Algorithm~\ref{alg:lpo}. 
The training pipeline is identical to standard group-based RL algorithms, with no additional computational cost.

\textbf{Temperature as an adaptive baseline.} 
While the temperature $\tau$ could theoretically be treated as a trust-region hyperparameter, we intentionally avoid introducing new tuning burdens.
Instead, we adapt $\tau$ using the group-relative advantage normalization statistics of existing methods, e.g., $\tau = \sigma_G$ for GRPO or $\tau = \mu_G$ for MaxRL.
This allows us to isolate gains from exact listwise projection while preserving the target temperature used by prior methods.

%% file: sections/experiments.tex
\section{Main Empirical Results}
\label{sec:experiments}

\subsection{Experimental Setup}
\label{sec:setup}

We evaluate LPO across four representative domains of reasoning: \textbf{logic, mathematics, programming, and multi-modal geometry}.
To assess generality, we benchmark across a \textbf{diverse set of LLM backbones} spanning different model sizes (1.5B–14B) and various LLM families.
We track the training performance by plotting the curves for expected Pass@1 (average accuracy over rollouts) and Pass@k~\citep{chen2021evaluating}, with the specific k configurations detailed per benchmark.

\textbf{Logical Reasoning.}
We adopt the Countdown Game, which requires composing given numbers using basic operations to match a target value.
We train on a subset of Countdown-34 dataset~\citep{tinyzero} and evaluate on both Countdown-34 and the harder Countdown-4.
We primarily use Qwen3-4B-Base~\citep{yang2025qwen3} and further evaluate models from other families in Sec.~\ref{exp:llm_families}.

\textbf{Mathematical Reasoning.}
We train on the MATH dataset~\citep{hendrycksmath2021} using Qwen3-1.7B-Base and Qwen3-8B-Base~\citep{yang2025qwen3}.
Evaluation is conducted on standard benchmarks following \citet{qu2025mopps,gao2025prompt}: AIME24, AIME25, AMC23, MATH500~\citep{lightman2023let}, Minerva Math~\citep{lewkowycz2022solving}, and OlympiadBench~\citep{he2024olympiadbench}.
In Appendix~\ref{appsec:polaris}, we scale to Qwen3-14B-Base on the larger Polaris dataset~\citep{Polaris2025}.

\textbf{Programming.}
We train and evaluate Qwen3-1.7B-Base on the respective training and test splits of the PRIME code dataset~\citep{cui2025prime}.

\textbf{Multi-Modal Geometry.}
Geometry problems require multi-modal understanding and reasoning. We train Qwen2.5-VL-3B-Instruct~\citep{bai2025qwen2} on the training split of the Geometry3k dataset~\citep{lu2021inter, geometry3k_dataset} and evaluate it on the test split.

\textbf{Baselines and LPO Variants.} 
We compare against three representative group-based policy gradient (\textbf{PG}) methods with varied target temperature designs: \textbf{GRPO ($\tau = \sigma_G$), Dr.GRPO ($\tau = 1$), and MaxRL ($\tau = \mu_G$)}.
To ensure a rigorous apples-to-apples comparison, we isolate the effect of the gradient formulation from temperature scaling by implementing LPO variants for each baseline.
Specifically, we develop forward ($\boldsymbol{\mathrm{LPO_{fwd}}}$) and reverse KL ($\boldsymbol{\mathrm{LPO_{rev}}}$) versions that use the exact same temperature $\tau$ as their corresponding PG counterpart.
The paired evaluation ensures that any performance differences are attributable to explicit listwise projection rather than temperature tuning.
We implement baselines and LPO with the verl framework~\citep{sheng2024verl}.

Additional implementation details are provided in Appendix~\ref{appsec:implementation}, together with extended experimental results in Appendix~\ref{appsec:results} and prompt examples in Appendix~\ref{appsec:dataexample}.

\begin{figure}
    \centering
    \includegraphics[width=\linewidth]{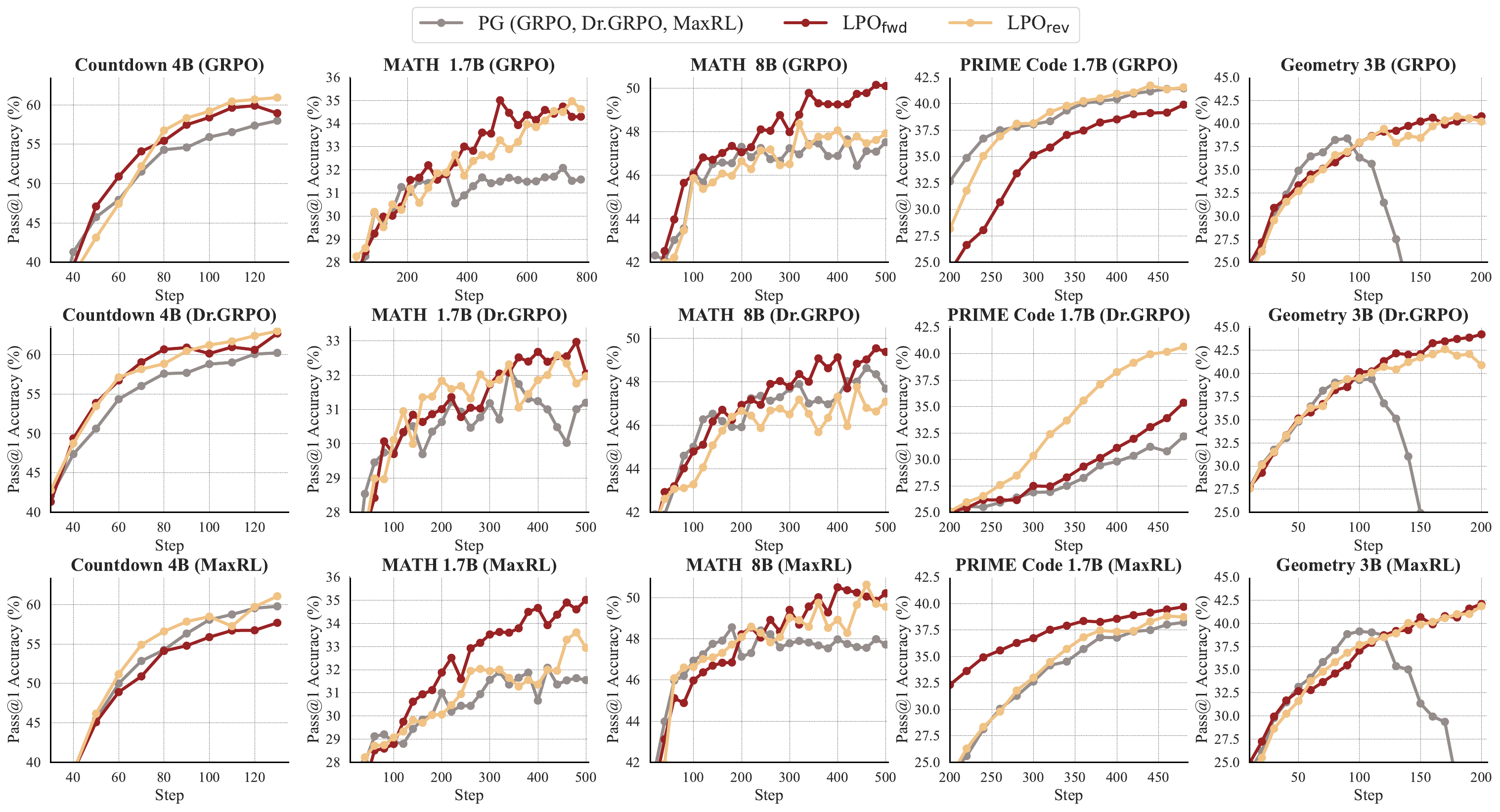}
    \vspace{-20pt}
    \caption{Training curves of Pass@1 accuracy. Two LPO variants ($\mathrm{LPO_{fwd}}$, $\mathrm{LPO_{rev}}$) are evaluated against group-based PG baselines (GRPO, Dr.GRPO, MaxRL, shown from top to bottom) across various LLM backbones and reasoning tasks with corresponding temperature designs.}
    \label{fig:pass1}
\end{figure}
\subsection{Training Performance}
\label{sec:training_performance}

\textbf{Performance gains.} 
Under paired temperature configurations, LPO consistently outperforms group-based PG baselines. For Pass@1 accuracy in Fig.~\ref{fig:pass1}, both LPO variants demonstrate efficient and improved training performance, exceeding their corresponding PG baselines in nearly all settings (13/15 for $\mathrm{LPO_{fwd}}$ and 13/15 for $\mathrm{LPO_{rev}}$). This advantage also extends to Pass@k evaluations in Fig.~\ref{fig:passk}, where both LPO variants continue to surpass the implicit PG methods (15/15 for $\mathrm{LPO_{fwd}}$ and 11/15 for $\mathrm{LPO_{rev}}$).
Together, these consistent gains suggest that replacing first-order advantage approximations with exact listwise projection on the response simplex offers a promising paradigm for improving the training efficiency and performance of RLVR.

\textbf{Projection divergence effects.}
Comparing the two variants reveals an empirical distinction: $\mathrm{LPO_{fwd}}$ outperforms $\mathrm{LPO_{rev}}$ in 13/15 scenarios for Pass@k.
This observation aligns well with the expectation: the mode-coverage property inherent to forward-KL actively preserves reasoning diversity for a broader distribution of valid solution paths.
More broadly, this highlights the flexibility of the decoupled target-projection framework, suggesting that exploring alternative projection divergences could unlock further unique optimization properties.

\textbf{Robustness across temperature parameterizations.} 
We observe that the optimal implicit temperature strategy $\tau$ is highly task-dependent, with no single design consistently dominating across all benchmarks. Despite this task-varying behavior, LPO delivers stable performance gains under all tested $\tau$ designs.
This indicates that exact listwise projection provides a robust optimization mechanism, yielding benefits that are largely orthogonal to the underlying temperature heuristic.

\begin{figure}
    \centering
    \includegraphics[width=\linewidth]{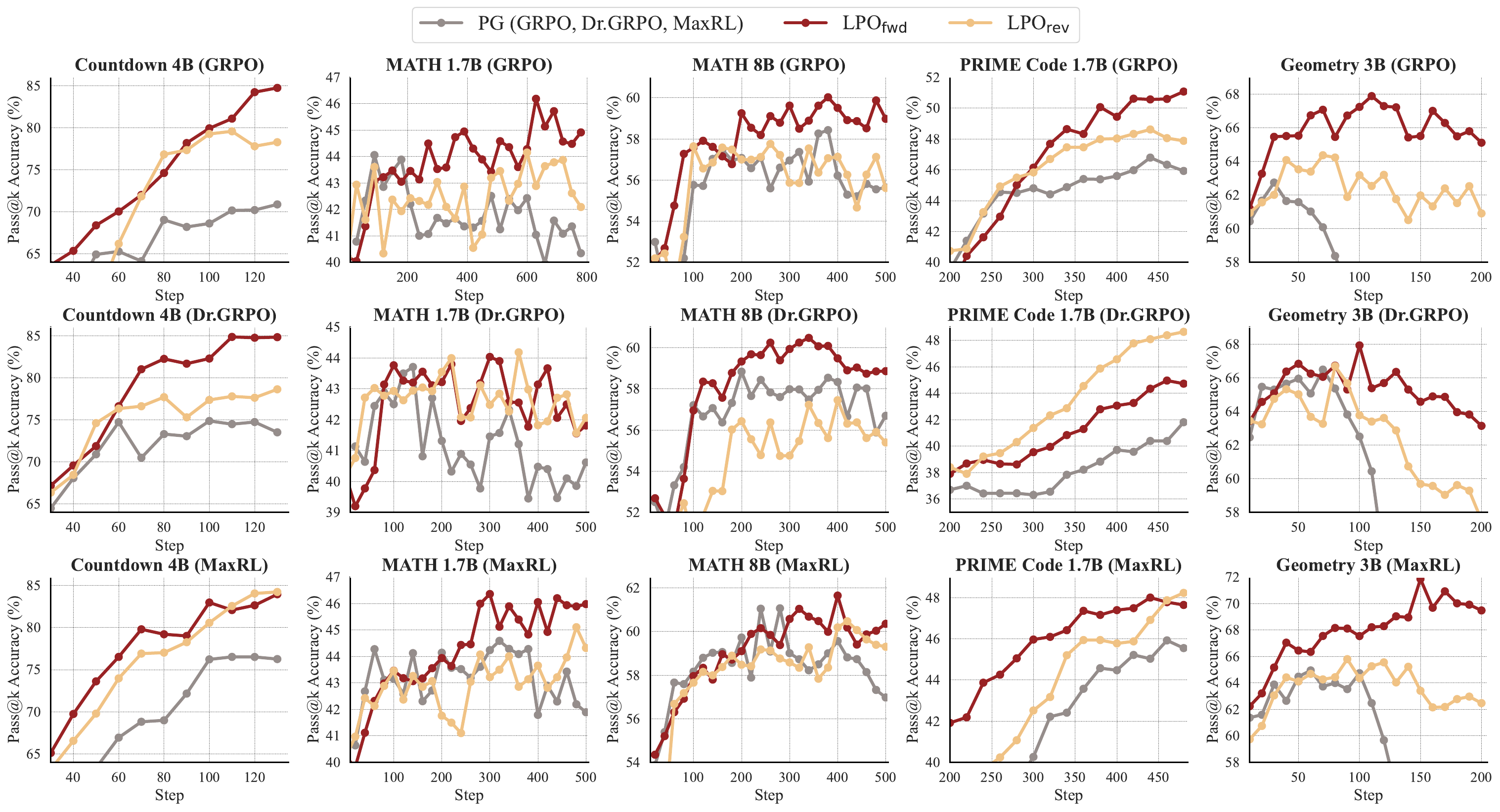}
    \vspace{-20pt}
    \caption{Pass@k training curves. LPO variants ($\mathrm{LPO_{fwd}}$, $\mathrm{LPO_{rev}}$) are evaluated against group-based PG baselines (GRPO, Dr.GRPO, MaxRL, shown from top to bottom) across various LLMs and tasks under paired temperature settings. Specific $k$ configurations are detailed per benchmark.}
    \label{fig:passk}
\end{figure}

\subsection{Training Dynamics}
\label{sec:training_dynamics}

To better understand the underlying optimization behaviors and validate our theoretical analysis, we track key training metrics: response entropy, gradient norm, and response length.

\textbf{Response entropy and exploration preservation.} 
As shown in Fig.~\ref{fig:grpodynamic} (top), both LPO variants generally maintain higher response entropy than PG baselines.
This corresponds to the projection properties:
$\mathrm{LPO_{rev}}$ corresponds to a maximum-entropy objective,
while $\mathrm{LPO_{fwd}}$ exhibits mode-covering behavior.
This sustained diversity directly explains the robust Pass@k improvements, positioning listwise projection as a principled remedy for the entropy collapse in RLVR.

\textbf{Gradient norms and optimization stability.} 
Fig.~\ref{fig:grpodynamic} (middle) reveals that LPO variants exhibit lower and more stable gradient norms compared to group-based PG methods.
This empirical stability is consistent with Corollary~\ref{cor:bounded}: LPO's exact projection on the response simplex yields controlled gradient coefficients, leading to stable optimization dynamics.

\begin{figure}
    \centering
    \includegraphics[width=\linewidth]{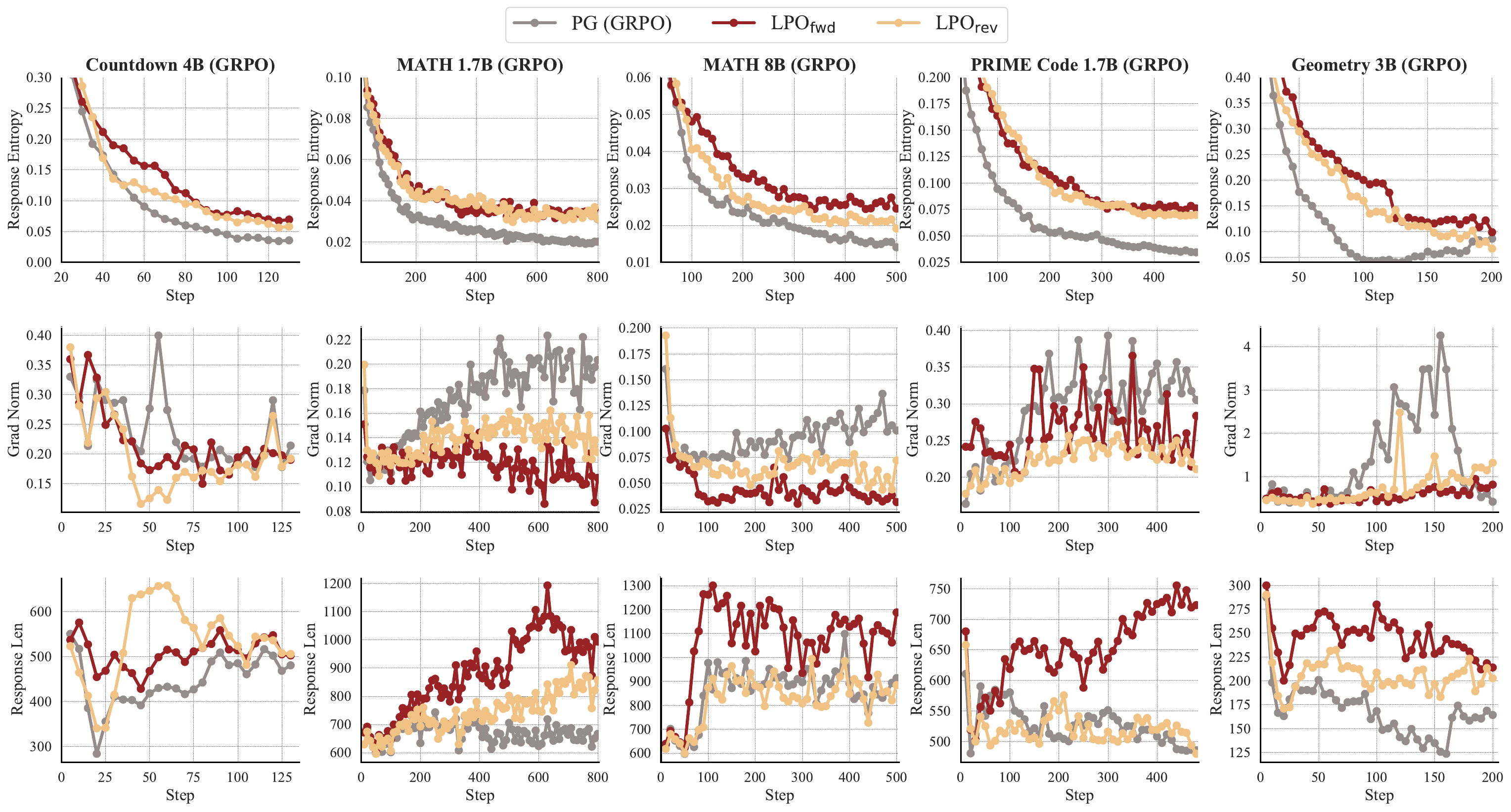}
    \vspace{-20pt}
    \caption{Training dynamics of LPO variants and GRPO. Rows from top to bottom respectively show the curves of response entropy, gradient norms, and response lengths.}
    \label{fig:grpodynamic}
\end{figure}

\textbf{Response length and reasoning behaviors.} 
Fig.~\ref{fig:grpodynamic} (bottom) shows that LPO tends to generate longer responses than PG.
As increased length often correlates with more detailed reasoning chains~\citep{yu2025dapo}, this is consistent with LPO encouraging more extensive exploration.
$\mathrm{LPO_{fwd}}$'s maximum length aligns with its mode-covering property, which promotes diverse reasoning paths.

\subsection{Additional Analysis}

\subsubsection{Listwise vs.\ Pointwise Projection}
\label{sec:vs_pointwise}

To highlight the contribution of the listwise projection, we ablate the listwise policy distribution in Eq.~\ref{eq:P} while keeping the target in Eq.~\ref{eq:W} unchanged.
This recovers the \textbf{pointwise projection} with forward KL~\citep{peters2010reps,abdolmaleki2018mpo,peng2019awr}, defined as $\mathcal{L}_{\mathrm{point}} = -\sum_{k} w_k^{\ast}\log\pi_\theta(y_k \vert x)$.
As shown in Fig.~\ref{fig:pointwise}, this pointwise variant suffers a severe performance drop.
This failure stems from the lack of a coupled competitive mechanism across responses in pointwise updates, resulting in unstable optimization.
Conversely, both group-based PG and LPO intrinsically employ a built-in control variate that stabilizes training.
These results suggest that our performance gains stem not merely from the target design, but from successfully marrying exact target fitting with the crucial structural variance reduction provided by the listwise projection.
Detailed properties of the two projections are deferred to Appendix~\ref{app:mpo}.

\begin{figure}[t]
\centering

\begin{minipage}[t]{0.49\textwidth}
\centering
\includegraphics[width=\linewidth]{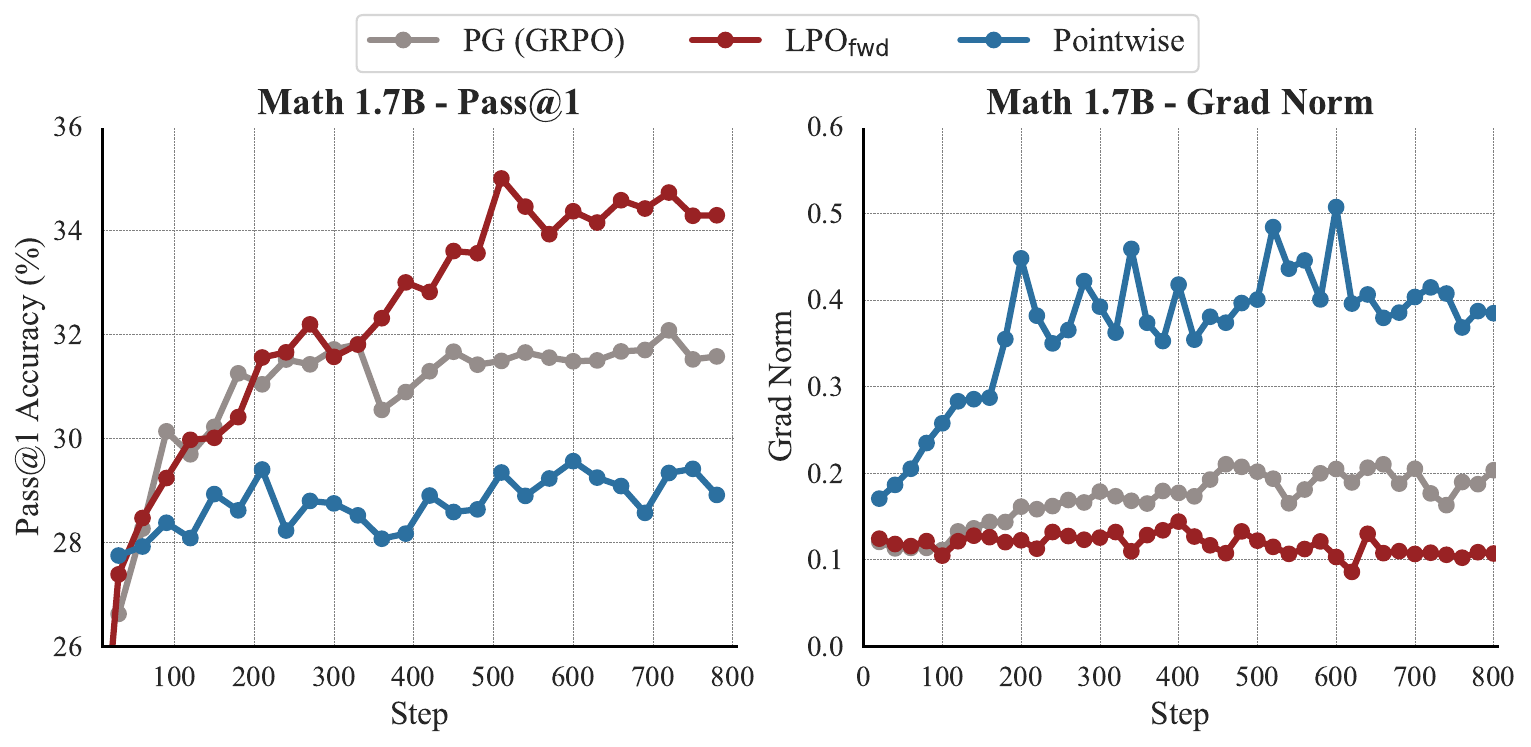}
\vspace{-20pt}
\caption{Ablation comparing listwise LPO with pointwise projection and GRPO baselines on MATH (Qwen3-1.7B-Base).}
\label{fig:pointwise}
\end{minipage}
\hfill
\begin{minipage}[t]{0.49\textwidth}
\centering
\includegraphics[width=\linewidth]{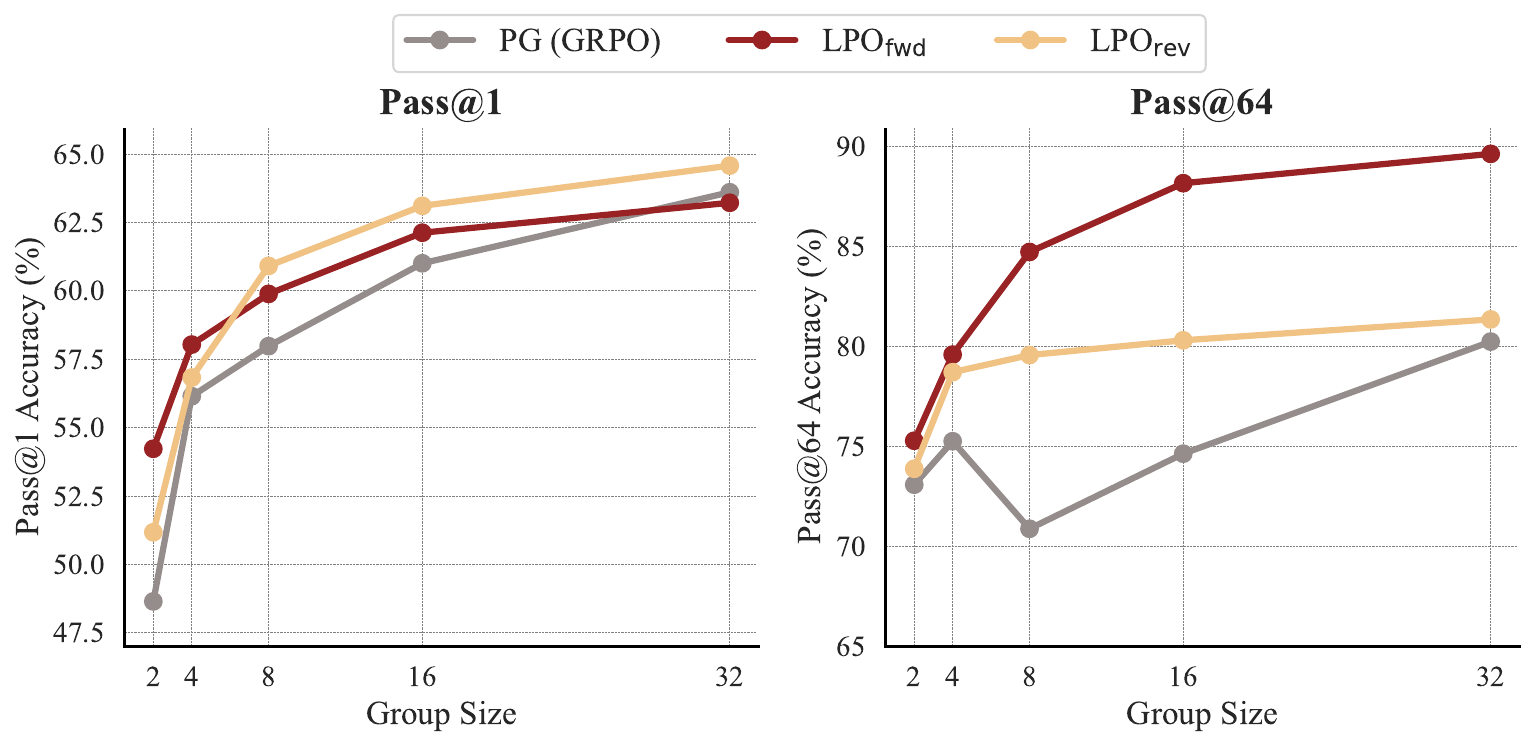}
\vspace{-20pt}
\caption{Effect of varying group sizes $K \in \{2, 4, 8, 16, 32\}$ on Countdown.}
\label{fig:groupsize}
\end{minipage}
\vspace{-5pt}
\end{figure}

\subsubsection{Effect of Group Size $K$}
\label{sec:group_size}

We investigate the impact of the sampled group size $K$ on Countdown.
As shown in Fig.~\ref{fig:groupsize}, across the tested group sizes ($K \in \{2, 4, 8, 16, 32\}$), both LPO variants remain highly competitive with GRPO, with advantages being particularly pronounced at smaller group sizes.
This suggests that explicit listwise projection improves sample efficiency, which stabilizes updates under limited samples.
Furthermore, LPO variants exhibit distinct scaling behaviors that validate their theoretical properties: $\mathrm{LPO_{rev}}$ achieves stronger Pass@1 performance, while $\mathrm{LPO_{fwd}}$ scales exceptionally well on Pass@64, supporting its mode-coverage property which structurally preserves reasoning diversity.

\subsubsection{Generalization across LLM Families}
\label{exp:llm_families}

To evaluate the generalizability of LPO, we conduct experiments across four prominent LLM families: Qwen, DeepSeek, Mistral, and Llama.
As illustrated in Fig.~\ref{fig:families} in Appendix~\ref{appsec:llmfamily}, LPO consistently delivers performance gains on the Countdown task, regardless of the underlying model architecture or training paradigm.
This consistent improvement across diverse backbones suggests that LPO is not sensitive to a specific model architecture, but rather benefits from the fundamental robustness of the listwise projection framework.
These results indicate that LPO can serve as a model-agnostic approach for improving reasoning performance in RLVR.

%% file: sections/appendix.tex
\section*{Appendix Overview}

This appendix provides supplementary proofs, conceptual discussions, and experimental details supporting the main text. It is organized as follows:

\begin{itemize}[leftmargin=10pt]
  \item \textbf{Appendix~\ref{sec:related} (Related Works):}  
  reviews literature on RLVR, RL as probabilistic inference, and listwise formulations.

  \item \textbf{Appendix~\ref{app:proofs_main} (Proofs):}  
  provides detailed mathematical derivations for all theoretical claims.

  \item \textbf{Appendix~\ref{app:additional} (Additional Discussions):}  
  expands on the framework's conceptual and practical scope. It unifies existing group-based RLVR algorithms, compares listwise projection with pointwise and preference optimization, explores future extensions.

  \item \textbf{Appendix~\ref{appsec:implementation} (Implementation Details):}  
  outlines the experimental setup, including tasks, LLM backbones, and training details.

  \item \textbf{Appendix~\ref{appsec:results} (Extended Experimental Results):}  
  reports further empirical findings, including scalability validation, on-policy optimization, extended training dynamics, and generalization across diverse LLM families.

  \item \textbf{Appendix~\ref{appsec:dataexample} (Data Examples):}  
  presents representative data examples used across the evaluated reasoning tasks.
\end{itemize}

\section{Related Works}
\label{sec:related}

\paragraph{Reinforcement learning with verifiable rewards.}
The alignment and reasoning capabilities of LLMs have been significantly advanced by RL~\citep{ouyang2022training, bai2022training}, initially dominated by PPO~\citep{schulman2017ppo} with a learned value model.
The emergence of RLVR for reasoning tasks~\citep{jaech2024openai, guo2025deepseek} has driven a paradigm shift toward critic-free, group-based policy gradient methods~\citep{shao2024grpo, ahmadian2024rloo, li2023remax}, which sample multiple responses per prompt and derive advantages entirely from within-group reward statistics.
A subsequent line of work has refined this paradigm, introducing novel advantage normalization~\citep{liu2025drgrpo, hu2025reinforce, tajwar2026maxrl}, trust-region mechanics~\citep{yu2025dapo, chen2025cispo}, and sequence-level scaling~\citep{zheng2025gspo}.
Recent theoretical works have sought to uncover the underlying mechanics of these methods~\citep{mroueh2025grpo,vojnovic2025alignment}.
Our LPO framework provides a unifying perspective by revealing that major group-based methods share the same target-projection geometry.
Concurrently, FlowRL~\citep{zhu2025flowrl} minimizes reverse KL against a Gibbs target approximated by a learned partition function network,
while contemporaneous TPO~\citep{kaddour2026tpo} similarly adopts cross-entropy on tilted simplex targets.
Differently, LPO contributes a unifying analytical Target-Projection framework that recovers existing group-based methods, and admits multiple divergences with provably satisfying properties.

\paragraph{RL as probabilistic inference.}
The idea of constructing a reward-weighted target distribution and projecting the policy toward it has deep roots in the RL-as-inference literature, which casts control as inference under a KL-regularized objective~\citep{dayan1997em, ziebart2010maxent, levine2018rl, geist2019regularized}. This perspective gives rise to a natural trust-region structure~\citep{amari1998natural, kakade2001npg, schulman2017trpo} and underlies a wide range of practical algorithms~\citep{peters2010reps,abdolmaleki2018mpo,song2019vmpo,peng2019awr,haarnoja2018sac,tomar2020mdpo}. However, these methods typically operate in continuous action spaces and resort to \textit{pointwise} projections, i.e., $-\sum_k w_k^{\ast}\log\pi_\theta(y_k)$.
In contrast, sampled responses in LLM form a finite response simplex, where normalization is exact and the partition function reduces to a finite sum over samples.
This structure enables listwise projection on the simplex, as exploited by LPO, which couples all responses through shared normalization and inherits satisfying gradients (Appendix~\ref{app:mpo}).

\paragraph{Listwise formulation.}
Listwise formulation has a long history in classical choice and learning-to-rank models~\citep{luce1959,plackett1975,cao2007listnet}, where a distribution over candidate sets or permutations is modeled or optimized.
Recent LLM alignment methods, such as DPO~\citep{rafailov2024dpo} and its extensions~\citep{liu2025lipo}, adopt pairwise or listwise preference structures to model relative comparisons among responses.
Listwise structures have also been employed in multi-agent LLM collaboration~\citep{yang2025maestro}.
Our approach operates in the standard RLVR setting for LLM post-training and explicitly constructs a target distribution based on verifiable rewards on the response simplex, followed by direct projection onto it.

\section{Proofs}
\label{app:proofs_main}

\subsection{KL Gradient Derivations}
\label{app:kl_gradients}

We derive the gradients of the forward and reverse KL divergences stated in Section~\ref{sec:projection}. For both derivations, we recall from Eq.~\ref{eq:P} that the logits are defined as $s_{\theta,k} = \log\pi_\theta(y_k \vert x) - \log\pi_b(y_k \vert x)$. Since the behavior policy $\pi_b$ is frozen, the gradient of the logit with respect to the parameters is simply $\nabla_\theta s_{\theta,k} = \nabla_\theta \log\pi_\theta(y_k \vert x)$.

\paragraph{Forward KL: $D_{\mathrm{KL}}(w^{\ast} \Vert P_\theta)$.}
By definition, $D_{\mathrm{KL}}(w^{\ast} \Vert P_\theta) = -\sum_{k=1}^K w_k^{\ast} \log P_{\theta,k} - H(w^{\ast})$, where $H(w^{\ast})$ is the entropy of $w^{\ast}$, which is constant with respect to $\theta$. Using the fact that $\log P_{\theta,k} = s_{\theta,k} - \log\sum_{j=1}^K e^{s_{\theta,j}}$, the Jacobian of the log-softmax is given by
\begin{equation}
\nabla_\theta \log P_{\theta,k} = \nabla_\theta s_{\theta,k} - \sum_{j=1}^K P_{\theta,j}\,\nabla_\theta s_{\theta,j}.
\end{equation}
Substituting this into the gradient of the forward KL divergence, we obtain:
\begin{align*}
\nabla_\theta D_{\mathrm{KL}}(w^{\ast} \Vert P_\theta) &= -\sum_{k=1}^K w_k^{\ast}\!\left(\nabla_\theta s_{\theta,k} - \sum_{j=1}^K P_{\theta,j}\,\nabla_\theta s_{\theta,j}\right) \\
&= -\sum_{k=1}^K w_k^{\ast}\,\nabla_\theta s_{\theta,k} + \left(\sum_{k=1}^K w_k^{\ast}\right) \sum_{j=1}^K P_{\theta,j}\,\nabla_\theta s_{\theta,j}.
\end{align*}
Since $w^{\ast}$ is a valid probability distribution ($\sum_{k=1}^K w_k^{\ast} = 1$), the second term simplifies. Reindexing the summation and substituting $\nabla_\theta s_{\theta,k} = \nabla_\theta \log\pi_\theta(y_k \vert x)$, we get:
\begin{equation}
\nabla_\theta D_{\mathrm{KL}}(w^{\ast} \Vert P_\theta) = \sum_{k=1}^K (P_{\theta,k} - w_k^{\ast})\,\nabla_\theta\log\pi_\theta(y_k \vert x). 
\end{equation}

\paragraph{Reverse KL: $D_{\mathrm{KL}}(P_\theta \Vert w^{\ast})$.}
Write the reverse KL as $D_{\mathrm{KL}}(P_\theta \Vert w^{\ast}) = \sum_{k=1}^K P_{\theta,k} \log(P_{\theta,k} / w_k^{\ast})$. We first compute the partial derivative with respect to a single logit $s_{\theta,j}$. Using the standard softmax Jacobian $\partial P_{\theta,k} / \partial s_{\theta,j} = P_{\theta,k}(\delta_{kj} - P_{\theta,j})$ and applying the product rule, we have:
\begin{align*}
\frac{\partial}{\partial s_{\theta,j}} D_{\mathrm{KL}}(P_\theta \Vert w^{\ast}) &= \sum_{k=1}^K P_{\theta,k}(\delta_{kj} - P_{\theta,j})\bigl[\log(P_{\theta,k} / w_k^{\ast}) + 1\bigr] \\
&= P_{\theta,j}\bigl[\log(P_{\theta,j} / w_j^{\ast}) + 1\bigr] - P_{\theta,j} \sum_{k=1}^K P_{\theta,k}\bigl[\log(P_{\theta,k} / w_k^{\ast}) + 1\bigr] \\
&= P_{\theta,j}\log\frac{P_{\theta,j}}{w_j^{\ast}} + P_{\theta,j} - P_{\theta,j} D_{\mathrm{KL}}(P_\theta \Vert w^{\ast}) - P_{\theta,j}(1) \\
&= P_{\theta,j}\Bigl[\log\frac{P_{\theta,j}}{w_j^{\ast}} - D_{\mathrm{KL}}(P_\theta \Vert w^{\ast})\Bigr].
\label{eq:kl_grad_general}
\end{align*}
Applying the multivariate chain rule $\nabla_\theta D_{\mathrm{KL}} = \sum_{j=1}^K \frac{\partial D_{\mathrm{KL}}}{\partial s_{\theta,j}} \nabla_\theta s_{\theta,j}$, we arrive at the full gradient:
\begin{equation}
\nabla_\theta\,D_{\mathrm{KL}}(P_\theta \Vert w^{\ast}) = \sum_{j=1}^K P_{\theta,j}\Bigl[\log\frac{P_{\theta,j}}{w_j^{\ast}} - D_{\mathrm{KL}}(P_\theta \Vert w^{\ast})\Bigr]\,\nabla_\theta\log\pi_\theta(y_j \vert x). \qed
\end{equation}

\paragraph{Logit-gap simplification for reverse KL.}
The per-logit coefficient for the reverse KL gradient, $c_k^{\mathrm{rev}} = \frac{\partial}{\partial s_{\theta,k}} D_{\mathrm{KL}}(P_\theta \Vert w^{\ast})$, can be elegantly simplified when written in terms of the logit gap $d_k = s_{\theta,k} - \phi_k$, where $\phi_k$ is the target logit from Eq.~\ref{eq:W}.

Express both probabilities explicitly with their partition functions: $P_{\theta,k} = \exp(s_{\theta,k})/Z_s$ and $w_k^{\ast} = \exp(\phi_k)/Z_\phi$. The log-ratio becomes:
\begin{equation}
\log\frac{P_{\theta,k}}{w_k^{\ast}} = (s_{\theta,k} - \log Z_s) - (\phi_k - \log Z_\phi) = (s_{\theta,k} - \phi_k) - (\log Z_s - \log Z_\phi) = d_k - c_s,
\end{equation}
where $c_s = \log Z_s - \log Z_\phi$ is strictly constant across all $k$. 
Consequently, the KL divergence can be written in terms of the expected gap:
\begin{equation}
D_{\mathrm{KL}}(P_\theta \Vert w^{\ast}) = \sum_{k=1}^K P_{\theta,k}(d_k - c_s) = \bar{d} - c_s,
\end{equation}
where $\bar{d} = \sum_{k=1}^K P_{\theta,k} d_k$. Substituting these back into the coefficient $c_k^{\mathrm{rev}}$, the constant $c_s$ perfectly cancels out:
\begin{equation}
c_k^{\mathrm{rev}} = P_{\theta,k}\bigl[(d_k - c_s) - (\bar{d} - c_s)\bigr] = P_{\theta,k}(d_k - \bar{d}).
\label{eq:revklcoeff}
\end{equation}
This reveals that the reverse KL gradient admits a baseline-subtracted form, 
where the baseline corresponds to the expected logit gap under the current policy $P_\theta$.

\subsection{Proof of Proposition~\ref{prop:pg_revkl}}
\label{app:proof_pg_revkl}

\propPgRevkl*
\begin{proof}
By the logit-gap simplification derived in Eq.~\ref{eq:revklcoeff}, the reverse KL gradient is fully characterized by the per-logit coefficients:
\begin{equation}
c_k^{\mathrm{rev}} = P_{\theta,k}(d_k - \bar{d}),
\end{equation}
where $d_k = s_{\theta,k} - A_k$ and $\bar{d} = \sum_{k=1}^K P_{\theta,k} d_k$. We directly evaluate this coefficient at the on-policy point $\pi_\theta = \pi_b$.

At the on-policy point, the logit offsets vanish ($s_{\theta,k} = 0$ for all $k$), which yields a uniform probability distribution over the generated list: $P_{\theta,k} = \softmax(\mathbf{0})_k = 1/K$. 
Consequently, the logit gap simplifies to $d_k = -A_k$. 

Applying the zero-mean advantage assumption $\sum_{k=1}^K A_k = 0$, the expected gap identically vanishes:
\begin{equation}
\bar{d} = \frac{1}{K} \sum_{k=1}^K (-A_k) = 0.
\end{equation}
Substituting these on-policy values back into the coefficient yields:
\begin{equation}
c_k^{\mathrm{rev}}\bigg|_{\pi_\theta=\pi_b} = \frac{1}{K} (-A_k - 0) = -\frac{A_k}{K}.
\end{equation}

Recall that the standard policy gradient in Eq.~\ref{eq:PG} can be expressed as $g_{\mathrm{PG}} = \sum_k c_k^{\mathrm{PG}} \nabla_\theta \log\pi_\theta(y_k \vert x)$ with coefficients $c_k^{\mathrm{PG}} = A_k / K$. 
Comparing the coefficients, we immediately obtain $c_k^{\mathrm{PG}} = -c_k^{\mathrm{rev}}|_{\pi_\theta=\pi_b}$, which proves that
\begin{equation}
g_{\mathrm{PG}} = -\nabla_\theta D_{\mathrm{KL}}(P_\theta \Vert w^{\ast})\big|_{\pi_\theta=\pi_b}.
\end{equation}
confirming that the policy gradient step is a gradient descent step on the reverse KL divergence at the on-policy point.

The centering assumption $\sum_k A_k = 0$ is without loss of generality: by the shift-invariance of softmax, replacing $A$ with $A - \bar{A}$ does not change the target $w^{\ast} = \softmax(A)$.

Finally, we clarify that the zero-mean assumption, i.e., $\sum_k A_k = 0$, is not a restrictive algorithmic requirement, but rather a natural reflection of the listwise projection's intrinsic mechanics. 
Due to the shift-invariance of the target softmax, any prompt-level scalar baseline applied uniformly to the group's rewards, e.g., the greedy baseline in ReMax~\citep{li2023remax}, is mathematically absorbed and nullified. 
The zero-sum constraint of the local simplex inherently induces a dynamically weighted mean-centering control variate ($d_k - \bar{d}$) during the gradient computation. 
At the on-policy point, this natively recovers the arithmetic zero-mean counterpart ($A_k - \bar{A}$). 
Thus, assuming centered advantages simply aligns our notation with the framework's built-in behavior at the exact point of equivalence.

\end{proof}

\paragraph{Off-policy approximation error.}
Proposition~\ref{prop:pg_revkl} establishes exact equality at $\pi_\theta = \pi_b$. We now quantify the discrepancy off-policy, incorporating the importance sampling ratio $r_k = \pi_\theta(y_k|x)/\pi_b(y_k|x)$ standard in practical PG methods without clipping.

Let $s_{\theta,k} = \log r_k$, $P_\theta = \softmax(s_\theta)$, and $\bar\delta = \max_k |r_k - 1|$. Both the policy gradient and the reverse KL gradient can be written as $\sum_k c_k \nabla_\theta \log\pi_\theta(y_k|x)$ with respective coefficients
\begin{equation}
c_k^{\mathrm{PG}} = \frac{r_k A_k}{K}, \qquad c_k^{\mathrm{revKL}} = -P_{\theta,k}(d_k - \bar{d}),
\end{equation}
where $d_k = s_{\theta,k} - A_k$ is the logit gap from Eq.~\ref{eq:revklcoeff} and $\bar{d} = \sum_k P_{\theta,k} d_k$. The per-coefficient discrepancy is
\begin{equation}
\Delta_k \;=\; c_k^{\mathrm{PG}} - c_k^{\mathrm{revKL}} \;=\; \frac{r_k A_k}{K} + P_{\theta,k}(d_k - \bar{d}),
\end{equation}
which vanishes identically at on-policy point ($\pi_\theta = \pi_b$).

We analyze the local regime where $\bar\delta < 1/2$, under which we have $r_k \in [1/2, 3/2]$, and thus 
$\|s_\theta\|_\infty\le 2\bar\delta$.

A first-order Taylor expansion of $P_{\theta,k} = \softmax(s_\theta)_k$ and $r_k = \exp(s_{\theta,k})$ around the zero vector $s_\theta = \mathbf{0}$ gives
\begin{equation}
P_{\theta,k} = \frac{1}{K} + \frac{s_{\theta,k} - \bar{s}_\theta}{K}
+ O\!\left(\frac{\|s_\theta\|_\infty^2}{K}\right), \quad r_k = 1 + s_{\theta,k} + O(\|s_\theta\|_\infty^2),
\end{equation}
where $\bar{s}_\theta = \frac{1}{K}\sum_k s_{\theta,k}$. Using the zero-mean advantage assumption $\sum_k A_k = 0$, the first-order expansion of $\bar{d} = \sum_k P_{\theta,k}(s_{\theta,k} - A_k)$ yields 
\begin{equation}
    \bar{d} = \bar{s}_\theta - \frac{1}{K}\sum_m s_{\theta,m} A_m + O(\bar\delta^2).
\end{equation}

Collecting terms:
\begin{equation}
\Delta_k = \frac{1}{K}\Bigl[(s_{\theta,k} - \bar{s}_\theta) + \bar{s}_\theta A_k + \frac{1}{K}\textstyle\sum_{m} s_{\theta,m} A_m\Bigr] + O\!\left(\frac{\bar\delta^2(1 + \|A\|_\infty)}{K}\right).
\end{equation}
Bounding the three terms via $|s_{\theta,k} - \bar{s}_\theta| \le 2\|s_\theta\|_\infty \le 4\bar\delta$, and symmetrically for the advantage terms $|\bar{s}_\theta A_k| \le 2\bar\delta\|A\|_\infty$ and $|\frac{1}{K}\sum_m s_{\theta,m} A_m| \le \|s_\theta\|_\infty \|A\|_\infty \le 2\bar\delta\|A\|_\infty$:
\begin{equation}
|\Delta_k| \;\le\; \frac{C\,\bar\delta\,(1 + \|A\|_\infty)}{K}
\label{eq:offpolicy_bound}
\end{equation}
for a universal constant $C > 0$, to leading order in $\bar\delta$. By the triangle inequality, the parameter-space gradient discrepancy satisfies
\begin{equation}
\|g_{\mathrm{PG}} - g_{\mathrm{revKL}}\| \;\le\; \textstyle\sum_k |\Delta_k|\,G_{\max} \;\le\; C'\bar\delta(1 + \|A\|_\infty)\,G_{\max},
\end{equation}
where $G_{\max} = \max_k \|\nabla_\theta \log\pi_\theta(y_k|x)\|$. The error is linear in the off-policy drift $\bar\delta$ and vanishes at the on-policy point, confirming that the policy gradient approximates reverse KL projection only in a neighborhood of the sampling distribution.
This rapid degradation under off-policy drift directly motivates the exact listwise projection proposed in LPO.

\paragraph{Remark: Connection to group-based policy gradients.}
Our off-policy analysis explicitly uncovers the structural relationship between exact listwise projection and practical group-based PG methods. By performing a strict first-order Taylor expansion on the exact reverse KL projection coefficient, we decouple it into three components:
\begin{equation}
c_k^{\mathrm{revKL}} \approx \underbrace{\frac{A_k}{K} + \frac{s_{\theta,k} A_k}{K}}_{\text{Pointwise advantage fitting}} - \underbrace{\left( \frac{\bar{s}_\theta A_k}{K} + \frac{1}{K^2}\textstyle\sum_{m} s_{\theta,m} A_m \right)}_{\text{Listwise normalization}} - \underbrace{\frac{s_{\theta,k} - \bar{s}_\theta}{K}}_{\text{Intrinsic entropy regularization}}.
\end{equation}
Remarkably, the gradient coefficient of group-based policy gradients without clipping, given by $c_k^{\mathrm{PG}} = \frac{r_k A_k}{K}$, yields a first-order expansion $c_k^{\mathrm{PG}} \approx \frac{A_k}{K} + \frac{s_{\theta,k} A_k}{K}$. 
This reveals a direct mathematical connection: the pointwise IS objective utilized in group-based policy gradients formally corresponds to the first-order advantage-fitting component of the reverse KL projection on the simplex, while the exact listwise formulation explicitly retains the coupled listwise normalization and intrinsic entropy regularization.

\subsection{Proof of Theorem~\ref{thm:gibbs}}
\label{app:proof_gibbs}

\thmGibbs*

\begin{proof}
Consider the optimization problem
\begin{equation}
w^{\ast}=\arg\max_{w\in\Delta^{K-1}} \hat J(w),
\qquad
\hat J(w)=\sum_{k=1}^K w_k R_k-\tau D_{\mathrm{KL}}(w\Vert P_t),
\end{equation}
where $P_t=\softmax(s_t)$ satisfies $P_{t,k}>0$ for all $k$.

Expanding the KL term gives
\begin{equation}
\hat J(w)
=
\sum_{k=1}^K w_k R_k
-\tau \sum_{k=1}^K w_k \log\frac{w_k}{P_{t,k}}.
\end{equation}
Introduce the Lagrangian for the simplex constraint $\sum_k w_k=1$:
\begin{equation}
\mathcal L(w,\lambda)
=
\sum_{k=1}^K w_k R_k
-\tau \sum_{k=1}^K w_k \log\frac{w_k}{P_{t,k}}
+\lambda\!\left(1-\sum_{k=1}^K w_k\right).
\end{equation}

Setting the stationary condition $\partial \mathcal L/\partial w_k=0$ yields
\begin{equation}
R_k-\tau\!\left(\log w_k-\log P_{t,k}+1\right)-\lambda=0,
\end{equation}
hence
\begin{equation}
\log w_k
=
\frac{R_k}{\tau}+\log P_{t,k}-1-\frac{\lambda}{\tau}.
\end{equation}
Exponentiating,
\begin{equation}
w_k
=
P_{t,k}\exp(R_k/\tau)\cdot C,
\qquad
C=\exp\!\left(-1-\lambda/\tau\right).
\end{equation}
Using $\sum_k w_k=1$, the normalization constant is
\begin{equation}
C^{-1}=\sum_{j=1}^K P_{t,j}\exp(R_j/\tau),
\end{equation}
therefore
\begin{equation}
w_k^\ast
=
\frac{P_{t,k}\exp(R_k/\tau)}
{\sum_{j=1}^K P_{t,j}\exp(R_j/\tau)}.
\label{eq:gibbs_appendix}
\end{equation}

Equivalently,
\begin{equation}
w^\ast=\softmax\!\bigl(R/\tau+\log P_t\bigr).
\end{equation}
Since $\log P_t=s_t-\log\sum_j e^{s_{t,j}}$, softmax is shift-invariant, this yields
\begin{equation}
w^\ast=\softmax(R/\tau+s_t)=\softmax(\phi),
\qquad
\phi_k=\frac{R_k}{\tau}+s_{t,k},
\end{equation}
which is Eq.~\ref{eq:W}.

Finally, $\hat J(w)$ is strictly concave on $\Delta^{K-1}$: the reward term is linear in $w$, while $D_{\mathrm{KL}}(w\Vert P_t)$ is strictly convex for $P_{t,k}>0$. Hence the maximizer is unique.
\end{proof}

\subsection{Proximal Objective as Reverse KL}
\label{app:proof_proximal_revkl}

\begin{restatable}[Proximal objective as reverse KL]{proposition}{propProximalRevkl}
\label{prop:proximal_revkl}
$\hat{J}(P_\theta) = -\tau D_{\mathrm{KL}}(P_\theta \| w^{\ast}) + \tau\log\hat{Z}$, so $\arg\max_{P_\theta} \hat{J}(P_\theta) = \arg\min_{P_\theta} D_{\mathrm{KL}}(P_\theta \| w^{\ast})$.
\end{restatable}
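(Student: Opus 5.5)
The plan is a direct algebraic identity check, substituting the closed form of the Gibbs target from Theorem~\ref{thm:gibbs} into the reverse KL. First, fix the normalizer. Let $\hat{Z} = \sum_{j=1}^K P_{t,j}\exp(R_j/\tau)$, which is the partition function appearing in Eq.~\ref{eq:gibbs_appendix}. Then for every $k$
\begin{equation}
\log w_k^{\ast} = \log P_{t,k} + \frac{R_k}{\tau} - \log\hat{Z}.
\end{equation}
This uses the $\log P_t$ form of the target rather than the shifted logits $\phi$, so that no extra constant $\log\sum_j e^{s_{t,j}}$ appears. Note that $\hat{Z}$ depends only on $P_t$, $R$ and $\tau$, all held fixed, and not on $\theta$.

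Second, expand the reverse KL for an arbitrary $P_\theta \in \Delta^{K-1}$:
\begin{equation}
\tau D_{\mathrm{KL}}(P_\theta \Vert w^{\ast}) = \tau\sum_k P_{\theta,k}\log\frac{P_{\theta,k}}{P_{t,k}} - \sum_k P_{\theta,k}R_k + \tau\log\hat{Z}\sum_k P_{\theta,k}.
\end{equation}
Use $\sum_k P_{\theta,k}=1$ to replace the last factor by $1$. The first term is $\tau D_{\mathrm{KL}}(P_\theta\Vert P_t)$ and the second is $-\hat{R}(P_\theta)$. Hence
\[
\tau D_{\mathrm{KL}}(P_\theta\Vert w^{\ast}) = -\hat{J}(P_\theta) + \tau\log\hat{Z}.
\]
Rearranging gives the claimed identity.

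Third, the argmax statement follows immediately. Since $\tau>0$ and $\tau\log\hat{Z}$ is constant in $\theta$, maximizing $\hat{J}(P_\theta)$ is equivalent to minimizing $D_{\mathrm{KL}}(P_\theta\Vert w^{\ast})$. As a sanity check, the identity also shows $\max_w \hat{J}(w)=\tau\log\hat{Z}$, attained at $w=w^{\ast}$, which is consistent with Theorem~\ref{thm:gibbs}.

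The only real obstacle is bookkeeping of the normalizing constant, since $w^{\ast}$ is written both as $\softmax(\phi)$ and as $\softmax(R/\tau+\log P_t)$. I would define $\hat{Z}$ explicitly as above to remove the ambiguity. I would also remark that if one instead normalizes with $\phi$, the constant changes by the $\theta$-independent term $\tau\log\sum_j e^{s_{t,j}}$, and the argmin conclusion is unaffected.
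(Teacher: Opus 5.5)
Your proposal is correct and matches the paper's proof essentially line by line: substitute $\log w_k^{\ast} = R_k/\tau + \log P_{t,k} - \log\hat{Z}$ from Theorem~\ref{thm:gibbs}, expand the reverse KL, and use $\sum_k P_{\theta,k}=1$ to recognize $\hat{J}(P_\theta)$. You also state explicitly that $\hat{Z}$ does not depend on $\theta$ and that $\tau>0$, which the argmax/argmin equivalence needs; the paper leaves both points implicit.
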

\begin{proof}
From Theorem~\ref{thm:gibbs}, $w_k^{\ast} = P_{t,k}\exp(R_k/\tau)/\hat{Z}$ where $\hat{Z} = \sum_j P_{t,j}\exp(R_j/\tau)$. Therefore $\log w_k^{\ast} = R_k/\tau + \log P_{t,k} - \log\hat{Z}$. Expanding the reverse KL:
\begin{align*}
-\tau D_{\mathrm{KL}}(P_\theta \| w^{\ast}) &= -\tau \sum_{k=1}^K P_{\theta,k} \log\frac{P_{\theta,k}}{w_k^{\ast}} \\
&= -\tau \sum_k P_{\theta,k} \bigl[\log P_{\theta,k} - \log w_k^{\ast}\bigr] \\
&= -\tau \sum_k P_{\theta,k} \bigl[\log P_{\theta,k} - R_k/\tau - \log P_{t,k} + \log\hat{Z}\bigr] \\
&= \sum_k P_{\theta,k} R_k - \tau \sum_k P_{\theta,k} \log\frac{P_{\theta,k}}{P_{t,k}} - \tau\log\hat{Z}.
\end{align*}
Recognizing $\hat{J}(P_\theta) = \sum_k P_{\theta,k} R_k - \tau D_{\mathrm{KL}}(P_\theta \| P_t) = \sum_k P_{\theta,k} R_k - \tau \sum_k P_{\theta,k} \log(P_{\theta,k}/P_{t,k})$, we obtain $\hat{J}(P_\theta) = -\tau D_{\mathrm{KL}}(P_\theta \| w^{\ast}) + \tau\log\hat{Z}$.
\end{proof}

\subsection{Proof of Theorem~\ref{thm:monotonic}}
\label{app:proof_monotonic}

\thmMonotonic*
\begin{proof}
\textbf{(a)} By Proposition~\ref{prop:proximal_revkl}, $\hat{J}(w) = -\tau D_{\mathrm{KL}}(w \| w^{\ast}) + \tau \log \hat{Z}$. Evaluating at the anchor $P_t$:
\begin{equation}
\hat{R}(P_t) = \hat{J}(P_t) = \tau \log \hat{Z} - \tau D_{\mathrm{KL}}(P_t \| w^{\ast}).
\end{equation}
Evaluating at the target $w^{\ast}$ (where $D_{\mathrm{KL}}(w^{\ast}\|w^{\ast}) = 0$):
\begin{equation}
\hat{R}(w^{\ast}) = \hat{J}(w^{\ast}) + \tau D_{\mathrm{KL}}(w^{\ast}\|P_t) = \tau\log\hat{Z} + \tau D_{\mathrm{KL}}(w^{\ast}\|P_t).
\end{equation}
Subtracting: $\hat{R}(w^{\ast}) - \hat{R}(P_t) = \tau[D_{\mathrm{KL}}(w^{\ast}\|P_t) + D_{\mathrm{KL}}(P_t\|w^{\ast})]$.

\textbf{(b)} We bound the expected reward error using the Total Variation (TV) distance. By definition, the $L_1$ norm relates to the TV distance as $\|P_{t+1} - w^{\ast}\|_1 = 2\TV(P_{t+1}, w^{\ast})$. 
By applying Pinsker's inequality, the TV distance is upper-bounded by either choice of KL projection:
\begin{equation}
\TV(P_{t+1}, w^{\ast}) \le \sqrt{\frac{1}{2}\min\bigl(D_{\mathrm{KL}}(w^{\ast} \Vert P_{t+1}),\; D_{\mathrm{KL}}(P_{t+1} \Vert w^{\ast})\bigr)}.
\end{equation}
Assuming the projection step achieves $\TV(P_{t+1}, w^{\ast}) \le \epsilon_{\mathrm{proj}}$, we apply H\"older's inequality with $|R_k| \le R_{\max}$:
\begin{align}
|\hat{R}(P_{t+1}) - \hat{R}(w^{\ast})| = \Bigl|\sum_k (P_{t+1,k} - w_k^{\ast}) R_k\Bigr|  
\le R_{\max}\|P_{t+1} - w^{\ast}\|_1 = 2 R_{\max} \epsilon_{\mathrm{proj}}.
\end{align}
Substituting this error term back into the minorization inequality from part (a) yields the final bound.

\textbf{(c)} Combining (a) and (b): 
\begin{align*}
    \hat{R}(P_{t+1}) &\ge \hat{R}(w^{\ast}) - 2R_{\max}\epsilon_{\mathrm{proj}} 
    \\
    &\ge \hat{R}(P_t) + \tau[D_{\mathrm{KL}}(w^{\ast}\|P_t) + D_{\mathrm{KL}}(P_t\|w^{\ast})] - 2R_{\max}\epsilon_{\mathrm{proj}}.
\end{align*}

\end{proof}

\subsection{Proof of Proposition~\ref{prop:optimality}}
\label{app:proof_optimality}

\propOptimality*
\begin{proof}
By induction: the base case $t=0$ is trivial. If 
$\pi_t(y) \propto \pi_0(y)\exp(tR(y)/\tau)$, then 
$\pi_{t+1}(y) \propto \pi_t(y)\exp(R(y)/\tau) 
\propto \pi_0(y)\exp((t+1)R(y)/\tau)$.

For convergence, consider any two responses $y_1, y_2$ with $R(y_1) > R(y_2)$:
\begin{equation}
\frac{\pi_t(y_1)}{\pi_t(y_2)} 
= \frac{\pi_0(y_1)}{\pi_0(y_2)}\ \exp\bigl(t\cdot\frac{R(y_1){-}R(y_2)}{\tau}\bigr) 
\to \infty.
\end{equation}
Since $\pi_0(y) > 0$ for all $y$, the mass concentrates on 
$\arg\max_y R(y)$, giving 
$\mathbb{E}_{\pi_t}[R] \to \max_y R(y)$.
\end{proof}

\paragraph{Connecting global optimality to LPO.}
Proposition~\ref{prop:optimality} characterizes the ideal full-space proximal operator: if one could exactly apply the Gibbs update over the entire response space, the resulting iteration converges to the global RL optimum.
For autoregressive LLMs, however, the required partition function is intractable over the combinatorially large sequence space. This computational barrier motivates LPO. Rather than operating in the full space, LPO restricts the same target-projection principle to the finite response simplex induced by $K$ sampled trajectories, yielding a principled and fully tractable approximation to the ideal proximal step.

\subsection{Proof of Corollary~\ref{cor:bounded} }
\label{app:proof_bounded}

\corBounded*
\begin{proof}
Let $c_k^{\mathrm{fwd}} = P_{\theta,k} - w_k^{\ast}$ where $P_\theta, w^{\ast} \in \Delta^{K-1}$.

\textbf{(a)} Since $P_{\theta,k} \in [0,1]$ and $w_k^{\ast} \in [0,1]$, we have $c_k^{\mathrm{fwd}} \in [-1, 1]$, hence $|c_k^{\mathrm{fwd}}| \le 1$.

\textbf{(b)} Since both $P_\theta$ and $w^{\ast}$ are probability distributions, $\sum_{k=1}^K c_k^{\mathrm{fwd}} = \sum_k P_{\theta,k} - \sum_k w_k^{\ast} = 1 - 1 = 0$. Partitioning into positive and negative parts: $\sum_{c_k^{\mathrm{fwd}} > 0} c_k^{\mathrm{fwd}} = -\sum_{c_k^{\mathrm{fwd}} < 0} c_k^{\mathrm{fwd}}$. Therefore $\sum_k |c_k^{\mathrm{fwd}}| = 2\sum_{c_k^{\mathrm{fwd}} > 0} c_k^{\mathrm{fwd}}$. Since each $c_k^{\mathrm{fwd}} \le 1$ and the positive parts sum to at most 1 (because $\sum_{c_k^{\mathrm{fwd}}>0} c_k^{\mathrm{fwd}} \le \sum_{c_k^{\mathrm{fwd}}>0} P_{\theta,k} \le 1$), we obtain $\sum_k |c_k^{\mathrm{fwd}}| \le 2$.

\textbf{(c)} As $P_\theta \to w^{\ast}$, $c_k^{\mathrm{fwd}} = P_{\theta,k} - w_k^{\ast} \to 0$ for all $k$ by definition.

For the parameter-space bound, $\nabla_\theta \mathcal{L}_{\mathrm{LPO_{fwd}}} = \sum_k c_k^{\mathrm{fwd}} \nabla_\theta \log\pi_\theta(y_k|x)$, so by the triangle inequality: $\|\nabla_\theta \mathcal{L}_{\mathrm{LPO_{fwd}}}\| \le \sum_k |c_k^{\mathrm{fwd}}| \cdot \|\nabla_\theta \log\pi_\theta(y_k|x)\| \le 2G_{\max}$.
\end{proof}

\subsection{Proof of Corollary~\ref{cor:mode_coverage}}
\label{app:proof_mode_coverage}

\corModeCoverage*
\begin{proof}
To rigorously bound $P_{\theta,k}$, we construct a binary event space (whether a response is $k$ or not $k$). By the Data Processing Inequality (DPI), the binary KL divergence is bounded by the full KL divergence:
\begin{equation}
w_k^{\ast}\log\frac{w_k^{\ast}}{P_{\theta,k}} + (1-w_k^{\ast})\log\frac{1-w_k^{\ast}}{1-P_{\theta,k}} \le D_{\mathrm{KL}}(w^{\ast} \| P_\theta) \le D.
\end{equation}

Since $1-P_{\theta,k} \le 1$, the term $\log(1/(1-P_{\theta,k})) \ge 0$. Dropping this non-negative component preserves the upper bound inequality:
\begin{equation}
w_k^{\ast}\log\frac{w_k^{\ast}}{P_{\theta,k}} + (1-w_k^{\ast})\log(1-w_k^{\ast}) \le D.
\end{equation}

Rearranging the terms to isolate $P_{\theta,k}$, we obtain:
\begin{equation}
\log\frac{w_k^{\ast}}{P_{\theta,k}} \le \frac{D}{w_k^{\ast}} - \frac{1-w_k^{\ast}}{w_k^{\ast}}\log(1-w_k^{\ast}).
\end{equation}

Exponentiating both sides yields the rigorously corrected lower bound:
\begin{equation}
P_{\theta,k} \ge w_k^{\ast} \exp\left(-\frac{D}{w_k^{\ast}}\right) (1-w_k^{\ast})^{\frac{1-w_k^{\ast}}{w_k^{\ast}}}.
\end{equation}

Let $f(x) = x \exp(-D/x) (1-x)^{\frac{1-x}{x}}$. It can be shown that $f(x)$ is monotonically increasing for $x \in (0, 1)$. Given the assumption that $w_k^{\ast} \ge \alpha > 0$, it follows that $P_{\theta,k} \ge f(w_k^{\ast}) \ge f(\alpha)$. Therefore, we conclude:
\begin{equation}
P_{\theta,k} \ge \alpha \exp\left(-\frac{D}{\alpha}\right) (1-\alpha)^{\frac{1-\alpha}{\alpha}}\ge \alpha \exp\left(-\frac{D}{\alpha}-1\right).
\end{equation}

\end{proof}

\section{Additional Discussions}
\label{app:additional}

\subsection{Contribution Clarification}
\label{app:scope}

This work mainly makes two contributions: (i) the Target-Projection (TP) framework, a unified geometric interpretation showing that dominant group-based PG methods implicitly construct the same Gibbs target family and approximate a reverse KL projection toward it; and (ii) Listwise Policy Optimization (LPO), which makes this target-projection explicit and, by decoupling the target from the projection, opens divergence selection as a new design axis inaccessible under the implicit PG paradigm, with provable theoretical guarantees and consistent empirical gains.

Several clarifications are included regarding the scope and design choices.
\begin{enumerate}[leftmargin=15pt]
    \item The TP analysis and LPO operate in the group-based regime ($K \ge 2$), which covers the vast majority of contemporary RLVR practice. Single-sample methods ($K{=}1$) lack a per-prompt simplex and require a different analytical treatment, as discussed in Appendix~\ref{app:extensions}.
    \item The specific choice of forward or reverse KL is not a core contribution of this work, with the broader design space discussed in Appendix~\ref{app:general_divergence}. Reverse KL is a natural choice since policy gradient implicitly performs an approximate reverse KL projection (Proposition~\ref{prop:pg_revkl}). Forward KL is similarly motivated by its mode-covering geometry, whose benefit for diversity has been observed in adjacent settings~\citep{wang2023beyond, li2025choice, anthonyreverse}.
    \item Recent engineering innovations, e.g., dynamic sampling and asymmetric clipping~\citep{yu2025dapo}, are orthogonal to LPO and can be combined with it, as discussed in Appendix~\ref{app:dapo}. The current experiments intentionally use a minimal shared pipeline to cleanly attribute gains to the target-projection mechanism.
    \item The experiments adopt a paired-temperature design, varing only the projection mechanism to isolate it as the sole controlled variable. While the temperature $\tau$ could be independently tuned, this is deliberately avoided to ensure a fair comparison, leaving for future work.
    \item All theoretical results and the implementation hold for arbitrary rewards $R_k \in \mathbb{R}$ without binary assumption. The focus on binary outcome rewards reflects the dominant RLVR setting, while the programming experiments already assess a non-binary reward.
    \item The experimental evaluation focuses on reasoning tasks with verifiable rewards, the primary application domain of group-based PG methods. The TP analysis and LPO are not inherently limited to this setting: extending empirical validation to broader RL post-training scenarios, e.g., RLHF with learned reward models, is a natural direction for future work.
\end{enumerate}

\subsection{Extensions and Future Directions}
\label{app:extensions}

\paragraph{Step-level listwise projection.}
Real-world applications often necessitate fine-grained optimization beyond sequence-level rewards, such as multi-turn agentic RL~\citep{jin2025searchr1} or reasoning tasks with dense step-level rewards~\citep{lightman2023let}.
The current sequence-level framework may extend to these scenarios: given a shared intermediate state, one can sample $K$ candidate continuations to form the local response simplex.
Crucially, deriving the target for these immediate steps requires estimating their expected final outcomes.
This can be achieved by rolling out each continuation to the terminal state.
Alternatively, to bypass the prohibitive cost of full rollouts, one can rely on a value network or a value-calibrated process reward model~\citep{lightman2023let} to estimate these expected future returns.
In either setting, the core LPO machinery carries over naturally, shifting the primary practical challenge to the fidelity of the step-level value estimation.

\paragraph{Off-policy replay.}
Because the listwise projection operates on the response simplex for each prompt, LPO can theoretically incorporate off-policy data to improve sample efficiency and amortize the high rollout costs typical of RLVR.
Specifically, by recording the behavior policy $\pi_b$ used to generate past responses, LPO can account for off-policy drift via importance sampling ratios $\pi_t/\pi_b$ and $\pi_\theta/\pi_b$. 
The listwise normalization implicitly acts as a self-normalized importance sampling (SNIS) estimator, inherently adapting the policy and target distribution without altering the underlying projection geometry.
Despite this theoretical elegance, realizing off-policy replay introduces practical optimization hurdles. 
As the policy evolves, severe drift from stale checkpoints can yield extreme probability ratios, which may collapse the effective listwise distributions and destabilize the projection gradients. 
Developing robust staleness-filtering or trust-region buffer management strategies to stabilize off-policy LPO remains a promising direction for future work.

\paragraph{Beyond group-based sampling.}
Current LPO requires $K \ge 2$ responses per prompt to form the response simplex, which precludes direct application in single-sample ($K{=}1$) pipelines.
One potential resolution is to assemble virtual groups using the off-policy replay buffer, though this inherits the aforementioned stability challenges.
A minimal alternative constructs a virtual response simplex by pairing the single sampled reward $R$ with a batch-level baseline $b$.
This contrastive formulation yields a sigmoid-squashed gradient coefficient $c = \tfrac{1}{2} - \sigma\!\bigl((R - b)/\tau\bigr)$ that preserves boundedness ($|c| \le 1/2$), though it necessarily sacrifices the zero-sum property as there is no physical group.
Both relaxations remain exploratory and characterizing their practical tradeoffs is a promising avenue.

\paragraph{Alternative divergences and adaptive scheduling.}
A distinctive feature of the explicit target-projection framework is the complete decoupling of the target distribution from the projection divergence, which is a critical design axis unavailable to policy gradient methods.
This separation naturally invites the exploration of entirely different statistical divergences that might induce unique and favorable optimization geometries tailored to specific reasoning tasks, as analyzed in Appendix~\ref{app:general_divergence}.
Furthermore, this decoupling enables dynamic scheduling strategies during training. 
For instance, one could employ forward KL in early stages to encourage broad exploration, and subsequently switch to reverse KL for stable late-stage exploitation, or progressively anneal the temperature $\tau$ to sharpen the target as the performance improves.
Systematic exploration of this expanded design space constitutes a natural next step for RL post-training.

\subsection{Existing Group-based RLVR as Implicit Target-Projection}
\label{app:dapo}

As revealed in Section~\ref{sec:analysis}, the dominant group-based RL algorithms can be unified under a shared geometric structure: each defines an implicit Gibbs target distribution $w^{\ast}$ and executes an approximate reverse KL projection via policy gradient.
The methods differ primarily in how they normalize advantages, which determines the implicit temperature $\tau$ and thus the sharpness of $w^{\ast}$.
Table~\ref{tab:tp_extended} groups these methods by their implicit target family.

\begin{table}[h]
\centering
\caption{Target-Projection decomposition of existing methods, grouped by implicit target family.}
\label{tab:tp_extended}
\resizebox{0.6\linewidth}{!}{
\begin{tabular}{llc}
\toprule
Target family & Methods & $\tau$\\
\midrule
$\softmax(R/\sigma_G)$ & GRPO, DAPO, CISPO, GSPO & $\sigma_G$  \\
$\softmax(R)$ & Dr.GRPO, RLOO ($\tau{\approx}1$), ReMax & $1$  \\
$\softmax(R/\mu_G)$ & MaxRL & $\mu_G$ \\
$\softmax(R/\sigma_{B'})$ & REINFORCE++$_{w/ Baseline}$ & $\sigma_{B'}$  \\
\bottomrule
\end{tabular}
}
\end{table}

\paragraph{$\sigma_G$-family: GRPO~\citep{shao2024grpo}, DAPO~\citep{yu2025dapo}, CISPO~\citep{chen2025cispo}, GSPO~\citep{zheng2025gspo}.}

\begin{equation}
A_k = \frac{R_k - \mu_G}{\sigma_G}, \qquad \tau = \sigma_G = \sqrt{\mu_G(1-\mu_G)}, \qquad w^{\ast} = \softmax(R/\sigma_G).
\end{equation}
The temperature is adaptive: maximal at $\mu_G = 0.5$ (balanced groups) and vanishing as $\mu_G \to 0$ or $1$ (near-unanimous groups), coupling target sharpness to group difficulty.
DAPO adds four projection-level innovations: asymmetric clipping, dynamic sampling to filter uninformative groups, token-level loss normalization, and overlong reward shaping.
CISPO modifies the projection by replacing clipping with a stop-gradient on the clipped importance ratio, preserving gradient contributions from all tokens. 
GSPO lifts the importance ratio and clipping from the token level to the sequence level $s_k = [\pi_\theta(y_k|x)/\pi_{\theta_{\mathrm{old}}}(y_k|x)]^{1/|y_k|}$, aligning the optimization unit with the reward granularity.
Many of these projection-level engineering tricks are orthogonal to our target construction and can be seamlessly integrated into the LPO framework.

\paragraph{$\tau{\approx}1$ family: Dr.GRPO~\citep{liu2025drgrpo}, RLOO~\citep{ahmadian2024rloo}, ReMax~\citep{li2023remax}.}
\begin{equation}
A_k = R_k - \mu, \qquad \tau = 1, \qquad w^{\ast} = \softmax(R).
\end{equation}
Dr.GRPO removes $\sigma_G$ normalization (fixing $\tau = 1$) and adopts token-level loss normalization to address length bias.
RLOO uses a leave-one-out baseline ($\tau = (K{-}1)/K \to 1$), yielding an unbiased advantage estimator with nearly the same implicit target.
ReMax uses a greedy-decode baseline $R_{\mathrm{greedy}}$ which cancels in the softmax, recovering the same target.

\paragraph{MaxRL~\citep{tajwar2026maxrl}.}
$A_k = (R_k{-}\mu_G)/\mu_G$, $\tau = \mu_G = n/K$, $w^\ast=\softmax(R/\mu_G)$. The temperature is directly proportional to the success rate, implementing an implicit curriculum: hard prompts (low $\mu_G$) receive aggressively sharp targets to encourage exploitation, while easy prompts (high $\mu_G$) receive diffuse targets to maintain diversity.

\paragraph{REINFORCE++~\citep{hu2025reinforce}.}
REINFORCE++ proposes two variants.
The base variant uses single-stage batch normalization $A_k = (R_k - \mu_B)/\sigma_B$; its primary use case is $K{=}1$ , where no per-prompt group exists and the target-projection decomposition does not apply.
The \emph{w/ Baseline} variant employs a two-stage process: first subtract the per-group mean to reshape rewards, $A'_k = R_k - \mu_G$, then normalize by the global batch statistics of these centered rewards, $A_k^{\mathrm{norm}} = (A'_k - \mu_{B'})/\sigma_{B'}$.
Since both $\mu_G$ and $\mu_{B'}$ are constant within a group, they cancel under softmax, yielding $w^{\ast} = \softmax(R/\sigma_{B'})$ with $\tau = \sigma_{B'}$.
Here $\sigma_{B'}$ is the batch-level standard deviation of the group-centered rewards.

\subsection{Listwise vs. Pointwise Projection}
\label{app:mpo}

An alternative to the listwise framework developed in Section~\ref{sec:lpo} is standard \emph{pointwise} projection, a paradigm widely used in classical RL algorithms (e.g., MPO~\citep{abdolmaleki2018mpo} and AWR~\citep{peng2019awr}). Both paradigms share the same target step, constructing the reward-weighted Gibbs distribution $w_k^{\ast} \propto \pi_b(y_k)\exp(R_k/\tau)$, but they diverge fundamentally in how they project the policy toward it.

\paragraph{Independent vs. coupled formulation.}
Pointwise projection minimizes a weighted negative log-likelihood:
\begin{equation}
\mathcal{L}_{\mathrm{pointwise}} = -\sum_{k=1}^K w_k^{\ast} \log \pi_\theta(y_k \vert x),
\end{equation}
which treats each sampled response independently. The gradient coefficient for response $k$ is simply $c_k^{\mathrm{point}} = -w_k^{\ast}$. This yields a strictly one-sided update that pushes probability mass \emph{toward} high-weight responses without any coupled counterbalancing force. 

In contrast, LPO with forward KL minimizes divergence $D_{\mathrm{KL}}(w^{\ast} \Vert P_\theta)$, where $P_\theta = \softmax(s_\theta)$ is the policy's listwise distribution. This explicit listwise formulation couples all $K$ responses through a shared normalization factor. Consequently, the gradient coefficient $c_k = P_{\theta,k} - w_k^{\ast}$ is strictly two-sided: responses where the policy over-allocates probability mass ($P_{\theta,k} > w_k^{\ast}$) are actively suppressed, while under-allocated responses are boosted.

\paragraph{Structural consequences.}
This architectural difference in the projection space produces three structural properties that pointwise projection inherently lacks:

\begin{enumerate}
\item \textbf{Zero-sum updates.} For LPO, the coefficients strictly sum to zero: $\sum_k c_k = 0$, acting as a built-in control variate for variance reduction. For pointwise projection, $\sum_k c_k^{\mathrm{point}} = -\sum_k w_k^{\ast} = -1$, yielding a net gradient direction that exerts a continuous, uncalibrated pull on the parameter space.

\item \textbf{Bounded gradients.} LPO coefficients satisfy $\sum_k |c_k| \le 2$ (Corollary~\ref{cor:bounded}), providing an intrinsic, reward-scale-invariant bound on the projection step. Pointwise projection lacks this relative scaling.

\item \textbf{Self-correcting convergence.} As $P_\theta \to w^{\ast}$, the LPO coefficients vanish ($c_k = P_{\theta,k} - w_k^{\ast} \to 0$), meaning optimization naturally terminates once the target is matched. Pointwise coefficients ($c_k^{\mathrm{point}} = -w_k^{\ast}$) are constant with respect to $\pi_\theta$.
\end{enumerate}

\paragraph{Origin of the difference.}
The pointwise objective $-\sum_k w_k^{\ast}\log\pi_\theta(y_k)$ mathematically corresponds to the cross-entropy $H(w^{\ast}, \pi_\theta)$, which equals $D_{\mathrm{KL}}(w^{\ast} \Vert \pi_\theta) + H(w^{\ast})$. Because $\pi_\theta$ is evaluated independently per response and is not normalized over the response group, this KL divergence measures the gap between unnormalized densities. LPO, by contrast, operates on the normalized listwise distribution $P_\theta \in \Delta^{K-1}$, which lives on the exact same finite probability simplex as $w^{\ast}$. This shared simplex geometry is what dictates the two-sided, zero-sum gradient structure.

\paragraph{Connection to Expectation-Maximization.}
The explicit target-projection procedure mirrors the structure of the Expectation-Maximization (EM) algorithm~\citep{dayan1997em,neal1998em}: the Gibbs target construction resembles an E-step that forms a target distribution, while the divergence minimization corresponds to an M-step that fits the model to this target.

\subsection{Connection to DPO and Preference Optimization}
\label{app:dpo}

When $K=2$, LPO reduces to a pairwise objective closely related to Direct Preference Optimization (DPO)~\citep{rafailov2024dpo}. Consider two responses: a preferred response $y_w$ with reward $R_w=1$, and a dispreferred response $y_l$ with reward $R_l=0$.

For two responses, the listwise distribution becomes
\begin{equation}
P_w=
\frac{\exp(s_w)}{\exp(s_w)+\exp(s_l)}
=
\sigma(s_w-s_l),
\end{equation}
where
$
s_k=\log(\pi_\theta(y_k|x)/\pi_b(y_k|x))
$
and $\sigma(\cdot)$ is the sigmoid function.
In on-policy setup ($\pi_t=\pi_b$), the baseline distribution is uniform, yielding
\begin{equation}
w_w^*=\sigma(1/\tau),\qquad
w_l^*=\sigma(-1/\tau).
\end{equation}

The forward-KL objective then simplifies to
\begin{equation}
\mathcal{L}_{\mathrm{LPO_{fwd}}}
=
-\sigma(1/\tau)\log \sigma(s_w-s_l)
-\sigma(-1/\tau)\log \sigma(s_l-s_w),
\end{equation}
which is a binary cross-entropy objective with temperature-controlled soft targets.

By comparison, DPO uses the pairwise logistic objective
\begin{equation}
\mathcal{L}_{\mathrm{DPO}}
=
-\log \sigma\!\bigl(\beta(s_w-s_l)\bigr),
\end{equation}
where
$
s_k=\log(\pi_\theta(y_k|x)/\pi_{\mathrm{ref}}(y_k|x)).
$

Thus, both methods share the same pairwise sigmoid structure, but differ fundamentally in four aspects: 
(i) standard DPO operates within an offline paradigm on static datasets, whereas LPO is an online RL algorithm; 
(ii) DPO penalizes logits against a static reference policy $\pi_{\mathrm{ref}}$, whereas LPO derives its target within a trust region around the pre-update policy $\pi_t$;
(iii) DPO is derived under a Bradley--Terry style preference model, whereas LPO arises from explicit divergence projection on the response simplex;
(iv) LPO uses soft targets controlled by $\tau$, recovering a hard preference target as $\tau\to0$.

This view places DPO-style pairwise optimization as the foundational $K=2$ baseline of the broader LPO framework, which naturally extends from pairwise preferences ($K=2$) to listwise optimization ($K>2$), and further to the population-level RL-as-inference limit as $K\to\infty$.

\paragraph{Distinction from Listwise Preference Optimization.}
Recent works like LiPO~\citep{liu2025lipo} extend DPO to listwise preference optimization with Plackett--Luce style ranking models.
Despite the similar ``listwise'' terminology, these methods learn from ranked preference data $y_1 \succ y_2 \succ \dots \succ y_K$.
In contrast, LPO is designed for online RLVR, utilizing absolute reward signals for explicit target-projection without any ranking-model assumptions.
Mathematically, Plackett--Luce ranking models and our Gibbs target both take a normalized softmax form.
Thus, they represent complementary ways of obtaining the same exponential-family target: one inferred from comparisons, the other directly specified by rewards.

\subsection{Extension to General Divergences}
\label{app:general_divergence}

In Section~\ref{sec:projection}, we instantiated LPO using forward and reverse KL divergences. However, the projection framework is not specific to KL and can be applied to any differentiable divergence defined on the probability simplex, including general $f$-divergences such as the Jensen--Shannon divergence.

Let $\mathcal{L} = D(w^\ast, P_\theta)$ be a differentiable divergence on $\Delta^{K-1}$. The gradient takes the form
\begin{equation}
\nabla_\theta \mathcal{L} = \sum_{k=1}^K c_k \nabla_\theta \log \pi_\theta(y_k \mid x),
\end{equation}
where the gradient coefficient is $c_k = \partial \mathcal{L} / \partial s_{\theta,k}$. Applying the chain rule with the softmax Jacobian, we have
\begin{equation}
c_k = \sum_{j=1}^K \frac{\partial \mathcal{L}}{\partial P_{\theta,j}} \frac{\partial P_{\theta,j}}{\partial s_{\theta,k}} = P_{\theta,k} \frac{\partial \mathcal{L}}{\partial P_{\theta,k}} - P_{\theta,k} \sum_{j=1}^K P_{\theta,j} \frac{\partial \mathcal{L}}{\partial P_{\theta,j}}.
\end{equation}
Summing these coefficients over all $K$ responses yields the identity
\begin{equation}
\sum_{k=1}^K c_k = \sum_{k=1}^K P_{\theta,k} \frac{\partial \mathcal{L}}{\partial P_{\theta,k}} - \underbrace{\left(\sum_{k=1}^K P_{\theta,k}\right)}_{=1} \left(\sum_{j=1}^K P_{\theta,j} \frac{\partial \mathcal{L}}{\partial P_{\theta,j}}\right) = 0.
\end{equation}

This zero-sum property is a direct consequence of the softmax parameterization on the probability simplex and holds for any differentiable objective defined on $P_\theta$. It plays a role analogous to a baseline in policy gradient methods and contributes to stabilizing the update.

While this zero-sum property is universal, other characteristics, such as coefficient boundedness or mode-seeking behavior, depend on the specific choice of divergence. 
KL divergences are adopted as natural default choices in LPO due to their stability and well-understood geometry.

\subsection{Entropy Regularization and Reverse KL Diversity}
\label{app:entropy}

\paragraph{Reverse KL as max-entropy RL.}
The objective of LPO with reverse KL is equivalent to $\max_\theta \sum_k P_{\theta,k} \phi_k + H(P_\theta)$, mirroring the maximum entropy RL objective~\citep{ziebart2010maxent}.
Here, the explicit entropy bonus emerges naturally from the structural formulation of the divergence.
In contrast, standard policy gradient methods lose this property, as they are equivalent to a first-order approximation only at the on-policy point.

\paragraph{Entropy regularization as target mixing.}
Adding entropy bonus $\gamma H(\pi_\theta)$ modifies the listwise target to $\tilde{w}^{\ast} = \softmax(R/(\tau + \gamma))$ in the on-policy setup, equivalent to increasing $\tau$ by $\gamma$. The entropy bonus is redundant when $\tau$ is a controllable hyperparameter.

\subsection{Broader Societal Impacts}
\label{appsec:broader}

This work introduces LPO as a novel paradigm for RLVR. As an algorithmic contribution to policy optimization, LPO may improve the efficiency and stability of RL post-training, potentially reducing the computational cost of training strong LLMs.
More broadly, improvements in reasoning capability and training efficiency may indirectly benefit downstream applications of LLMs, such as scientific problem solving, software development, and educational tools, by enabling more capable and reliable systems.
On the negative side, the method inherits the general risks associated with increasingly capable LLMs, including potential dual-use concerns if deployed without appropriate safeguards.
Additionally, while LPO improves optimization efficiency in RLVR, addressing the environmental and societal costs of large-scale model training remains an open challenge for the broader research community.

\section{Implementation Details}
\label{appsec:implementation}

\subsection{Tasks}
\subsubsection{Logical Reasoning}

\paragraph{Training Dataset.}
We adopt the Countdown Number Game as the logical reasoning testbed.
This task requires models to synthesize basic arithmetic operations to reach a target value using a provided set of integers.
We use a subset of 2000 problems sampled from the Countdown 34 dataset (\url{https://huggingface.co/datasets/Jiayi-Pan/Countdown-Tasks-3to4}) as training dataset, which supplies either three or four source numbers per question.

\paragraph{Evaluation Benchmarks.}
We assess model performance using two reserved evaluation sets: a split of 512 instances from Countdown 34 (CD34) and a subset of 512 instances from Countdown 4 (CD4), available at \url{https://huggingface.co/datasets/Jiayi-Pan/Countdown-Tasks-4}. The CD4 variant is notably more difficult because it strictly guarantees four source numbers per problem, thereby massively expanding the combinatorial search space.
To evaluate performance, we generate 64 independent responses per instance to compute both the expected \texttt{Pass@1} (the average correctness across all 64 samples) and the \texttt{Pass@64} metrics. All reported training curves reflect the average performance across both the CD34 and CD4 benchmarks.

\paragraph{Reward Function.}
Following \citet{tinyzero}, we augment the binary accuracy reward with a formatting bonus. This design explicitly incentivizes proper structural adherence alongside correct reasoning:
\begin{equation}
r =
\begin{cases}
1   & \text{if the response is correct}, \\
0.1 & \text{if the response is incorrect but properly formatted}, \\
0   & \text{otherwise}.
\end{cases}
\end{equation}

\subsubsection{Mathematics Reasoning}

\paragraph{Training Dataset.}
Following \citet{qu2025mopps}, we train models on the MATH dataset~\citep{hendrycksmath2021}, which consists of 7.5k problems from mathematics competitions.
We use the public version hosted at \url{https://huggingface.co/datasets/DigitalLearningGmbH/MATH-lighteval}.
For extended validation at a larger scale, we additionally train the Qwen3-14B-Base model on the Polaris~\citep{Polaris2025} dataset in Appendix~\ref{appsec:polaris}, which comprises a broader collection of roughly 53k high-quality mathematical reasoning problems.
The Polaris dataset is hosted at \url{https://huggingface.co/datasets/POLARIS-Project/Polaris-Dataset-53K}.

\paragraph{Evaluation Benchmarks.}
Following \citet{gao2025prompt, qu2026gps}, we evaluate mathematical reasoning performance on a suite of benchmarks, including AIME24, AIME25, AMC23, MATH500~\citep{lightman2023let}, Minerva Math~\citep{lewkowycz2022solving}, and OlympiadBench~\citep{he2024olympiadbench}, using the datasets hosted at \url{https://huggingface.co/datasets/math-ai}.
Following prior works~\citep{gao2025prompt, qu2026gps}, we sample $k$ independent responses per problem to compute both the expected \texttt{Pass@1} (defined as the average accuracy across all $k$ samples, or \texttt{avg@k}) and the \texttt{Pass@k} metrics. The sample size $k$ is tailored to the size and difficulty of each benchmark: we set $k=32$ for competition-level suites (AIME24, AIME25, AMC23), $k=4$ for Minerva Math, and $k=1$ for MATH500 and OlympiadBench. Training curves report the average performance across all math benchmarks.

\paragraph{Reward Function.}
Following the default configuration in verl~\citep{sheng2024verl}, we use a binary reward function that assigns $1$ to correct responses and $0$ otherwise.

\subsubsection{Programming}

\paragraph{Training Dataset.}
To assess the training performance in code generation, following \citet{cui2025prime}, we adopt the code split in PRIME dataset, available at \url{https://huggingface.co/datasets/PRIME-RL/Eurus-2-RL-Data}, which contains 25.3k problems that are mainly programming competition level.

\paragraph{Evaluation Benchmarks.}
For evaluation, we evaluate on the 1k held-out validation problems from the PRIME code dataset. For each prompt, we sample $k=8$ independent Python programs to compute the expected \texttt{Pass@1} (the average success rate across the 8 samples) and the \texttt{pass@8} metrics.

\paragraph{Reward Function.}
Following PRIME~\citep{cui2025prime}, we extract the generated Python program and evaluate it against a suite of test cases. The reward is defined as the fraction of tests passed:
\begin{equation}
r = \frac{\text{number of passed tests}}{\text{total number of tests}}.
\end{equation}
Compared to a strict binary reward setup, this formulation provides a denser learning signal, yielding values in $[0, 1]$ where $1$ indicates a fully correct solution and $0$ indicates complete failure.

\subsubsection{Geometry}

\paragraph{Training Dataset.}
We train on the 2.1k-problem training split of the Geometry3k dataset~\citep{lu2021inter, geometry3k_dataset}, available at \url{https://huggingface.co/datasets/hiyouga/geometry3k}.
Each problem in Geometry3k consists of a geometric diagram and an accompanying natural language question, often requiring multi-step spatial or logical reasoning.

\paragraph{Evaluation Benchmarks.}
We evaluate performance on the official 601-problem test split of Geometry3k. For each prompt, we generate 16 independent responses to calculate both the expected \texttt{Pass@1} (the average accuracy across 16 samples) and the \texttt{Pass@16} metrics.

\paragraph{Reward Function.}
Following verl~\citep{sheng2024verl}, we use the same reward function as in Countdown.

Appendix~\ref{appsec:dataexample} presents data examples from each of the training datasets.

\subsection{Models}
\label{appsec:used_llms}
We evaluate eight models spanning a diverse range of types, parameter scales, and model families. All models are sourced directly from their official Hugging Face repositories and used as released:
\begin{itemize}[leftmargin=10pt]
    \item Qwen3-1.7B-Base: \url{https://huggingface.co/Qwen/Qwen3-1.7B-Base};
    \item Qwen3-4B-Base: \url{https://huggingface.co/Qwen/Qwen3-4B-Base};
    \item Qwen3-8B-Base: \url{https://huggingface.co/Qwen/Qwen3-8B-Base};
    \item Qwen3-14B-Base: \url{https://huggingface.co/Qwen/Qwen3-14B-Base};
    \item Qwen2.5-VL-3B-Instruct: \url{https://huggingface.co/Qwen/Qwen2.5-VL-3B-Instruct};
    \item DeepSeek-R1-Distill-Qwen-1.5B: \url{https://huggingface.co/deepseek-ai/DeepSeek-R1-Distill-Qwen-1.5B};
    \item Llama-3.1-8B-Instruct: \url{https://huggingface.co/meta-llama/Llama-3.1-8B-Instruct};
    \item Mistral-7B-Instruct-v0.1: \url{https://huggingface.co/mistralai/Mistral-7B-Instruct-v0.1};
\end{itemize}

\begin{figure}[t]
    \centering
    \includegraphics[width=0.6\linewidth]{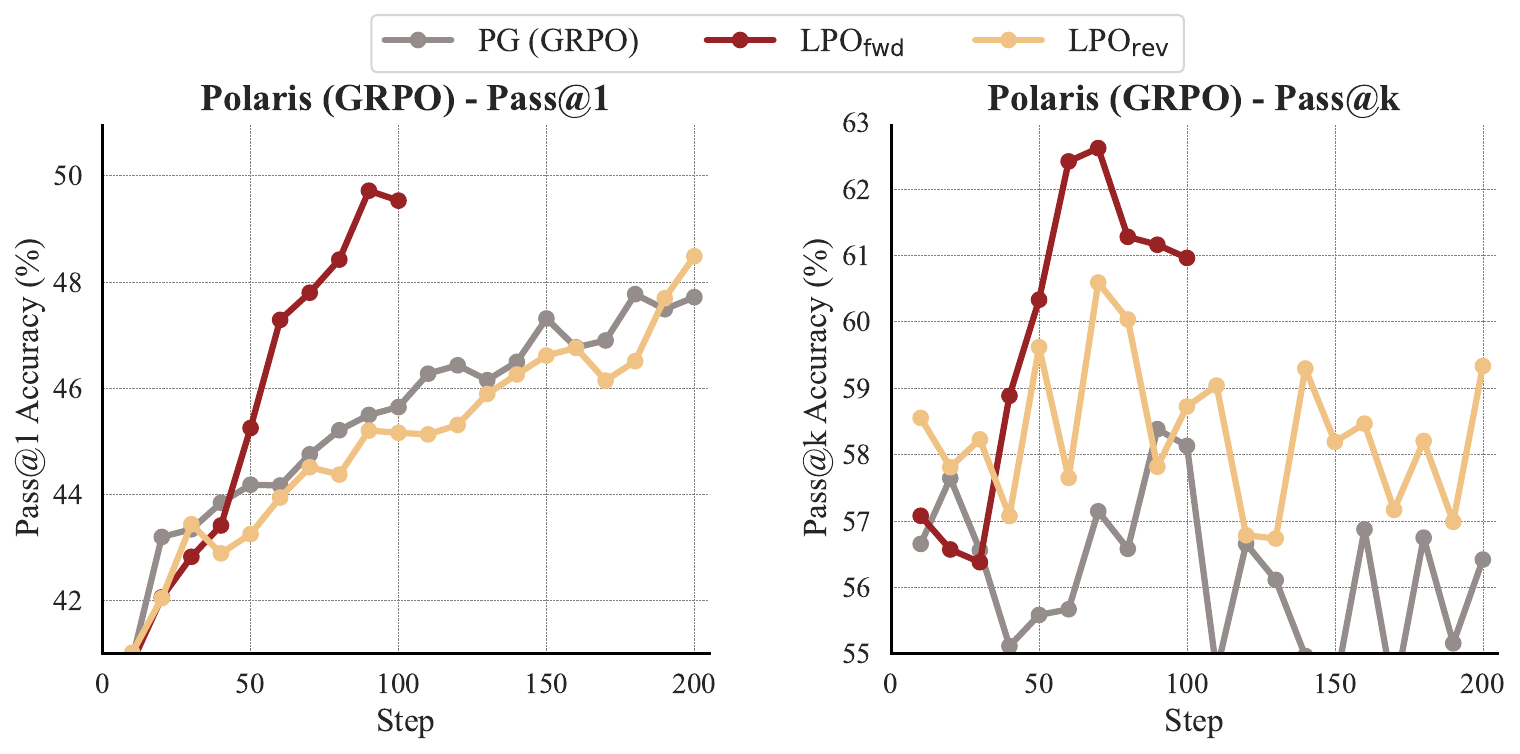}
    \vspace{-11pt}
    \caption{Scalability validation. We compare LPO with GRPO by training Qwen3-14B-Base on the larger Polaris dataset.}
    \vspace{-5pt}
    \label{fig:polaris}
\end{figure}
\subsection{Training Details}

We evaluate our method against three representative group-based policy gradient baselines: GRPO~\citep{shao2024grpo}, Dr.GRPO~\citep{liu2025drgrpo}, and MaxRL~\citep{tajwar2026maxrl}, all implemented within the verl framework~\citep{sheng2024verl}. 
Across all four reasoning scenarios, we sample a group of $K=8$ responses per prompt during training to estimate advantages or construct response simplex.
The generation temperature is set to $1.0$ with $\texttt{top\_p}=1.0, \texttt{top\_k}=-1.0$, 
and we disable the KL penalty by setting $\beta=0$, consistent with \citet{yu2025dapo}.
Evaluation generations use a lower temperature of $0.6$, following common practice~\citep{qu2025mopps}.

We tailor the batch sizes and context lengths according to the complexity of the specific benchmark. 
For the Math and PRIME-Code tasks, we set the training batch size to 256, the mini-batch size to 128, and the maximum response length to 4096 tokens. 
For the Countdown and Geometry tasks, we scale down the training batch size to 128 and the mini-batch size to 64, with the maximum response length capped at 1024 tokens.
This configuration performs two gradient updates per iteration, inherently introducing a mild off-policy drift. A strictly fully on-policy ablation is provided in Appendix~\ref{app:fully_onpolicy}.

Optimization is uniformly performed using Adam~\citep{kingma2014adam} with a learning rate of $1\mathrm{e}{-6}$ across all tasks.
The optimizer parameters are set to $\beta=(0.9,0.999)$ with a weight decay of $0.1$.
The clipping parameter is fixed at $\epsilon=0.2$. 
Given the highly non-linear parameter-space updates, we additionally apply token-level clipping~\citep{schulman2017ppo}.
The token-level log-density ratio $\delta_{k,i} = \log\pi_\theta(y_{k,i}|x,y_{k,<i}) - \log\pi_b(y_{k,i}|x,y_{k,<i})$ is clipped to $[\log(1{-}\epsilon), \log(1{+}\epsilon)]$ and then weighted by $c_k$ to form the final loss.

All experiments are conducted on 8 NVIDIA H20 GPUs.

\section{Extended Experimental Results}
\label{appsec:results}

\subsection{Scalability Validation}
\label{appsec:polaris}

To verify the scalability and extensibility of the LPO framework, we conduct additional experiments using the Qwen3-14B-Base model on the Polaris dataset, which contains approximately 53k complex reasoning problems.
We compare both LPO variants with the GRPO baseline.
As shown in Fig.~\ref{fig:polaris}, LPO-fwd exhibits remarkable sample efficiency, reaching the peak performance achieved by GRPO at 200 training steps within only 70 steps, while simultaneously providing significant improvements in both Pass@1 and Pass@k metrics.
For the LPO-rev variant, although its Pass@1 accuracy is comparable to GRPO, it shows superior robustness in maintaining Pass@k, effectively preserving response diversity.
These findings provide strong evidence that LPO is scalable and capable of maintaining its theoretical advantages alongside increases in model capacity and data volume.

\begin{figure}
    \centering
    \includegraphics[width=\linewidth]{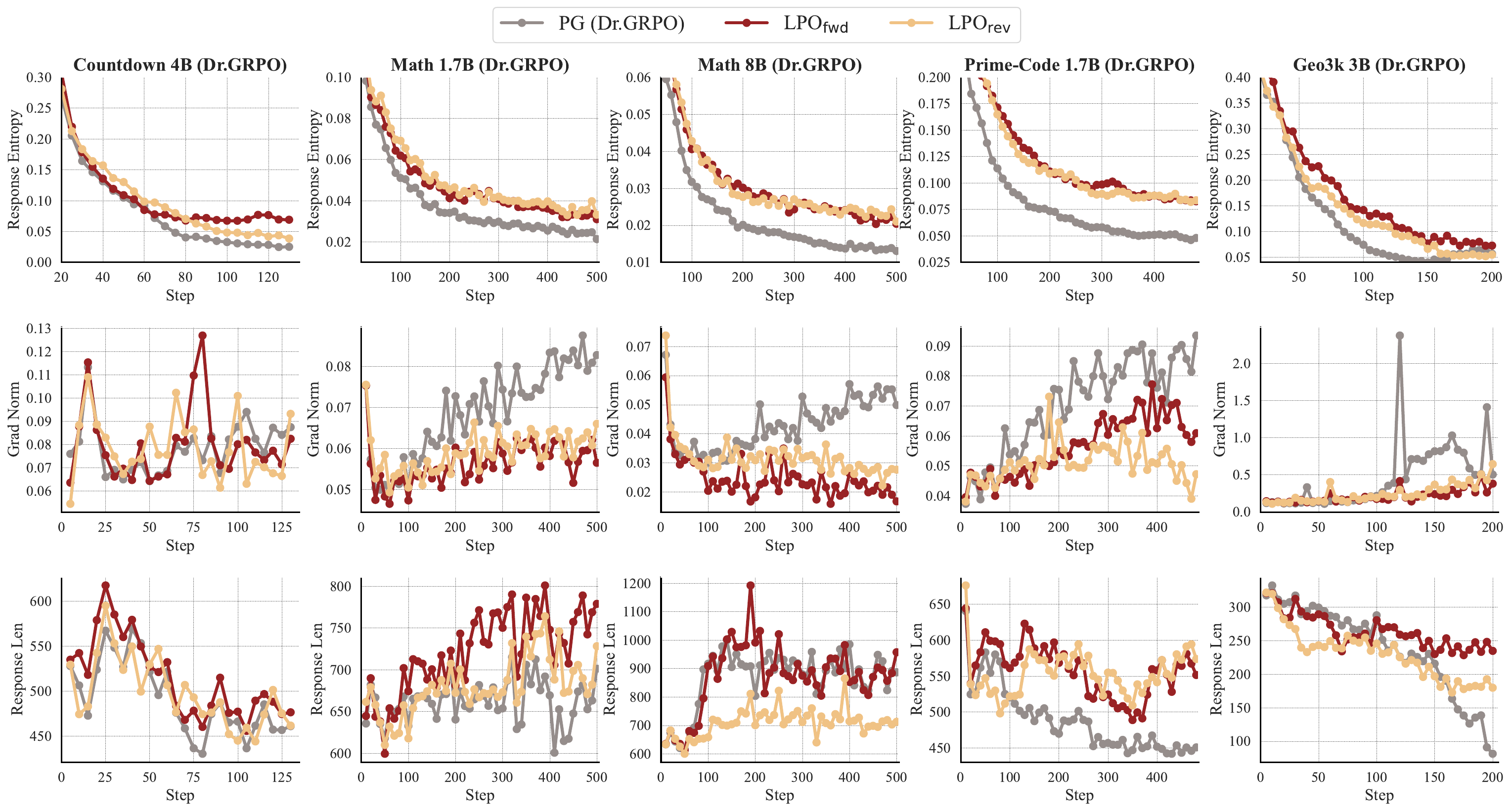}
    \vspace{-22pt}
    \caption{Training dynamics of LPO variants and Dr.GRPO. Rows from top to bottom respectively show the curves of response entropy, gradient norms, and response lengths.}
    \label{fig:drgrpodynamic}
\end{figure}

\begin{figure}
    \centering
    \includegraphics[width=\linewidth]{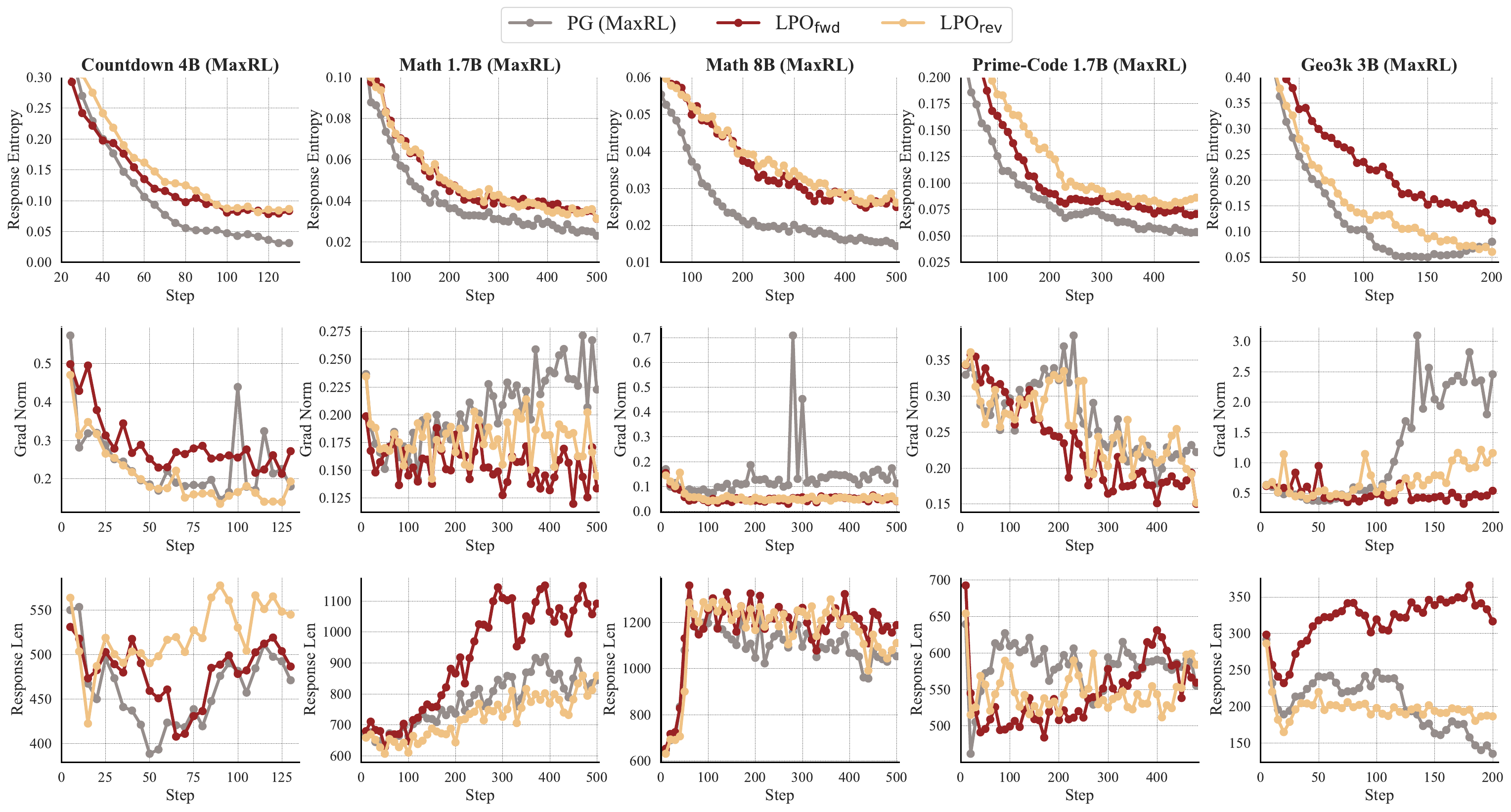}
    \vspace{-22pt}
    \caption{Training dynamics of LPO variants and MaxRL. Rows from top to bottom respectively show the curves of response entropy, gradient norms, and response lengths.}
    \label{fig:maxrldynamic}
\end{figure}

\subsection{Extended Training Dynamics}
\label{app:training_dynamics}

To corroborate the analysis presented in Sec.~\ref{sec:training_dynamics}, we provide the extended training dynamics of LPO compared against Dr.GRPO in Fig.~\ref{fig:drgrpodynamic} and MaxRL in Fig.~\ref{fig:maxrldynamic}. 

Consistent with the observations relative to the GRPO baseline in the main text, LPO variants demonstrate superior optimization properties across these baselines. Specifically, LPO maintains higher response entropy, exhibits lower and more stable gradient norms, and encourages longer response chains. These supplementary results further support the structural advantages of listwise projection.

\begin{figure}[t]
    \centering
    \includegraphics[width=\linewidth]{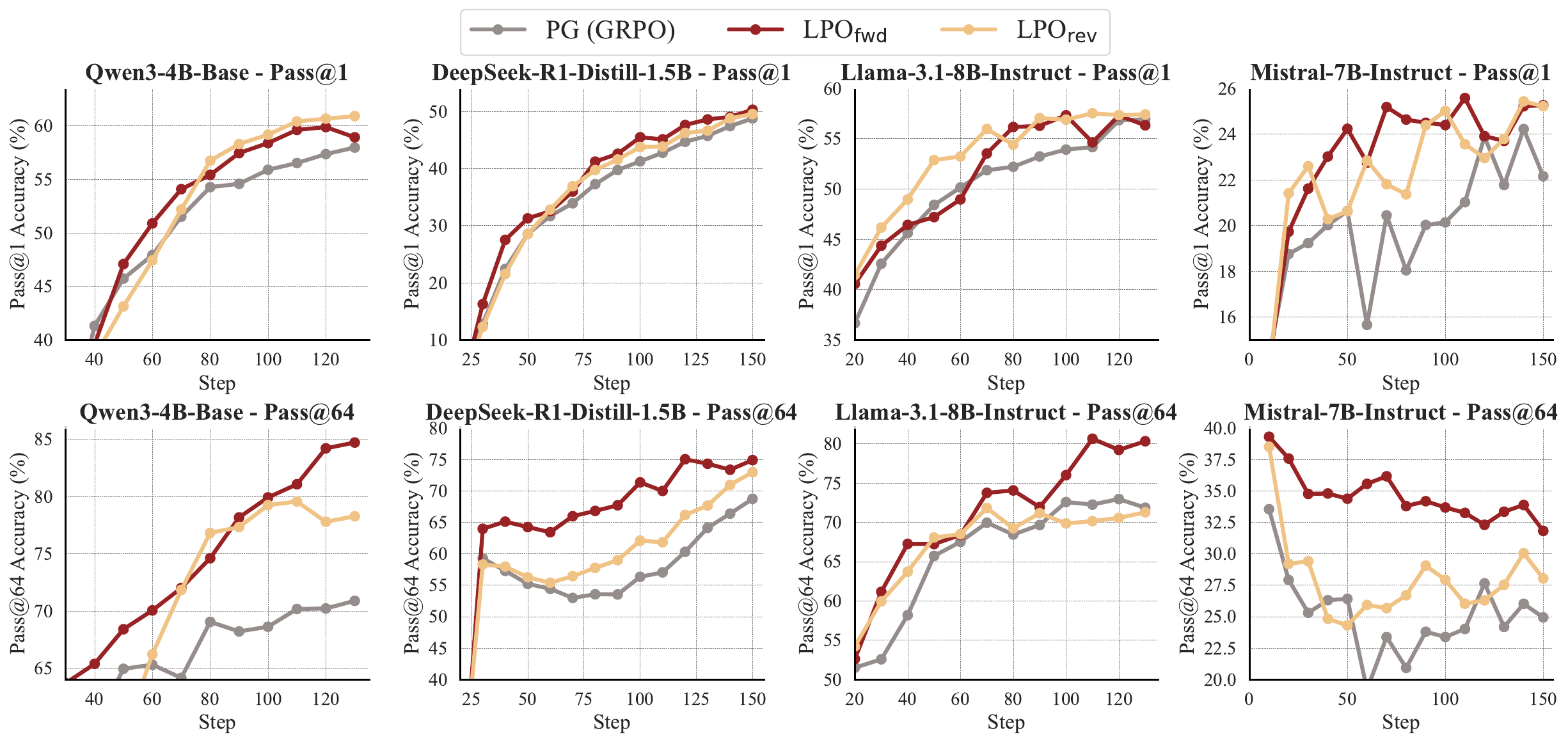}
    \vspace{-20pt}
    \caption{Generalization of LPO across diverse LLM families. Performance is evaluated on Countdown using Qwen, DeepSeek, Mistral, and Llama backbones.}
    \vspace{-5pt}
    \label{fig:families}
\end{figure}
\subsection{Generalization across LLM Families}
\label{appsec:llmfamily}

To evaluate the generalizability of LPO, we conduct experiments across four prominent LLM families: Qwen, DeepSeek, Mistral, and Llama.
These include models with different training paradigms such as base (only pre-trained), distilled, and instruction-tuned variants.
As shown in Fig.~\ref{fig:families}, 
across all evaluated LLMs, LPO consistently improves performance on the Countdown task over the PG baseline, with especially stable gains under Pass@64 evaluation.

\begin{figure}[t]
    \centering
    \includegraphics[width=0.6\linewidth]{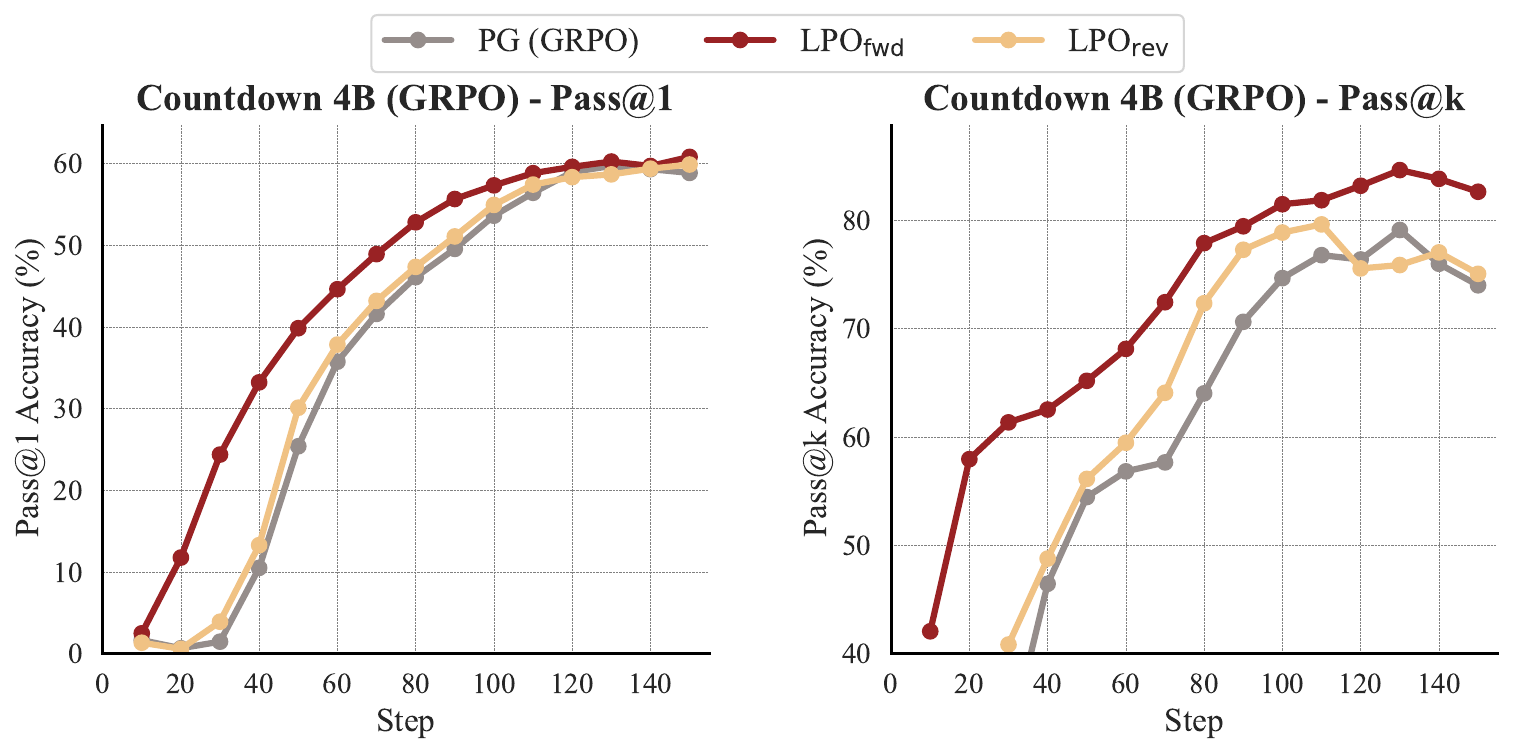}
    \vspace{-11pt}
    \caption{Empirical evaluation on the Countdown task under a fully on-policy regime (one gradient update per iteration).}
    \label{fig:fullonp}
\end{figure}
\subsection{Fully On-Policy Optimization}
\label{app:fully_onpolicy}

To empirically validate the theoretical connections established in Proposition~\ref{prop:pg_revkl}, we conduct an evaluation on Countdown under a strictly fully on-policy regime, as shown in Fig.~\ref{fig:fullonp}.
By setting both the batch size and the optimization mini-batch size to 256, we ensure exactly one gradient update is performed per iteration.
As predicted by our analysis, the training curves of $\text{LPO}_{\text{rev}}$ are practically indistinguishable from standard GRPO, supporting that the group-based PG objective mathematically collapses into the exact reverse KL projection at the on-policy point.
Furthermore, evaluating $\text{LPO}_{\text{fwd}}$ under this identical setup reveals its distinct exploration superiority: it demonstrates higher sample efficiency in early training stages and achieves a superior Pass@k accuracy.

\begin{table}[t]
  \centering
\renewcommand{\arraystretch}{1.3}
\caption{
Evaluation on mathematics benchmarks.
\textbf{Base} denotes the backbone without RLVR.
\textbf{Pass@1} and \textbf{Pass@k} are computed by averaging benchmark-level Avg@k and Pass@k scores across benchmarks, respectively.
\textbf{Bold} and \underline{underlined} values indicate the best and second-best results for each policy gradient baseline, respectively.
}
\vspace{-5pt}
  \resizebox{\linewidth}{!}{
    \begin{tabular}{crcccccccccc|cc}
    \toprule
    \multirow{2}[2]{*}{\textbf{Backbone}} & \multicolumn{1}{l}{\multirow{2}[2]{*}{\textbf{Method}}} & \textbf{MATH500} & \textbf{Olympiad.} & \multicolumn{2}{c}{\textbf{Minerva.}} & \multicolumn{2}{c}{\textbf{AMC23}} & \multicolumn{2}{c}{\textbf{AIME24}} & \multicolumn{2}{c|}{\textbf{AIME25}} & \multicolumn{1}{c}{\multirow{2}[2]{*}{\textbf{Pass@1$\uparrow$}}} & \multicolumn{1}{c}{\multirow{2}[2]{*}{\textbf{Pass@k$\uparrow$}}} \\
          &       & \textbf{Avg@1} & \textbf{Avg@1} & \textbf{Avg@4} & \textbf{pass@4} & \textbf{Avg@32} & \textbf{pass@32} & \textbf{Avg@32} & \textbf{pass@32} & \textbf{Avg@32} & \textbf{pass@32} &       &  \\
    \midrule
    \multirow{10}[8]{*}{\textbf{Qwen3-1.7B-Base}} & \multicolumn{1}{l}{\textbf{Base}} & 52.8  & 21.2  & 21.2  & 32.8  & 30.0    & 79.3  & 3.4   & 25.0    & 3.3   & 23.8  & 22.0  & 40.2  \\
\cmidrule{2-14}          & \multicolumn{1}{l}{\textbf{GRPO}} & 71.4  & 33.5  & 29.8  & 37.2  & 45.4  & 83.7  & 10.8  & 26.5  & 4.2   & 20.9  & 32.5  & 42.1  \\
          & \hspace{0.6em}$\hookrightarrow$$\boldsymbol{\mathrm{LPO_{fwd}}}$ & 72.0    & 38.1  & 29.9  & 37.5  & 50.1  & 83.4  & 13.2  & 33.7  & 8.6   & 29.6  & \textbf{35.3}  & \textbf{46.1}  \\
          & \hspace{0.6em}$\hookrightarrow$$\boldsymbol{\mathrm{LPO_{rev}}}$ & 73.0    & 37.1  & 29.2  & 36.4  & 47.0    & 83.1  & 13.9  & 26.6  & 9.6   & 22.9  & \underline{35.0}  & \underline{42.3}  \\
\cmidrule{2-14}          & \multicolumn{1}{l}{\textbf{DrGRPO}} & 69.2  & 36.1  & 29.8  & 36.8  & 43.3  & 76.2  & 8.5   & 25.4  & 6.3   & 30.4  & 32.2  & 42.2  \\
          & \hspace{0.6em}$\hookrightarrow$$\boldsymbol{\mathrm{LPO_{fwd}}}$ & 73.8  & 36.5  & 28.3  & 36.5  & 46.2  & 75.9  & 10.3  & 27.1  & 5.3   & 30.4  & \textbf{33.4}  & \underline{42.5}  \\
          & \hspace{0.6em}$\hookrightarrow$$\boldsymbol{\mathrm{LPO_{rev}}}$ & 70.0    & 36.7  & 28.6  & 38.1  & 45.6  & 78.9  & 10.3  & 31.7  & 6.3   & 26.7  & \underline{32.9}  & \textbf{43.9}  \\
\cmidrule{2-14}          & \multicolumn{1}{l}{\textbf{MaxRL}} & 72.6  & 35.3  & 28.6  & 36.9  & 42.4  & 79.0    & 10.6  & 30.8  & 4.8   & 24.6  & 32.4  & 42.8  \\
          & \hspace{0.6em}$\hookrightarrow$$\boldsymbol{\mathrm{LPO_{fwd}}}$ & 71.8  & 37.5  & 30.5  & 36.4  & 49.9  & 85.5  & 11.8  & 28.6  & 8.5   & 31.9  & \textbf{35.0}  & \textbf{45.6}  \\
          & \hspace{0.6em}$\hookrightarrow$$\boldsymbol{\mathrm{LPO_{rev}}}$ & 72.6  & 35.8  & 28.8  & 36.3  & 46.1  & 82.6  & 10.7  & 28.3  & 7.7   & 32.6  & \underline{33.6}  & \underline{45.0}  \\
    \midrule
    \multirow{10}[8]{*}{\textbf{Qwen3-8B-Base}} & \multicolumn{1}{l}{\textbf{Base}} & 68.0    & 33.7  & 31.7  & 44.1  & 46.5  & 84.3  & 12.1  & 39.9  & 7.9   & 31.8  & 33.3  & 50.0  \\
\cmidrule{2-14}          & \multicolumn{1}{l}{\textbf{GRPO}} & 86.2  & 51.9  & 40.4  & 46.1  & 63.8  & 79.1  & 24.0    & 52.1  & 19.5  & 40.7  & 47.6  & 54.5  \\
          & \hspace{0.6em}$\hookrightarrow$$\boldsymbol{\mathrm{LPO_{fwd}}}$ & 86.4  & 55.8  & 42.3  & 48.3  & 69.1  & 95.1  & 29.3  & 51.0    & 19.1  & 38.7  & \textbf{50.3}  & \textbf{58.3}  \\
          & \hspace{0.6em}$\hookrightarrow$$\boldsymbol{\mathrm{LPO_{rev}}}$ & 85.0    & 53.9  & 41.1  & 46.9  & 67.0    & 93.1  & 23.3  & 45.7  & 21.6  & 40.2  & \underline{48.7}  & \underline{56.5}  \\
\cmidrule{2-14}          & \multicolumn{1}{l}{\textbf{DrGRPO}} & 85.8  & 54.7  & 42.2  & 48.4  & 67.7  & 89.7  & 24.9  & 56.3  & 19.3  & 47.0   & \underline{49.1}  & \textbf{60.4}  \\
          & \hspace{0.6em}$\hookrightarrow$$\boldsymbol{\mathrm{LPO_{fwd}}}$ & 87.4  & 51.6  & 42.6  & 48.3  & 70.2  & 91.5  & 25.6  & 59.5  & 19.8  & 38.4  & \textbf{49.5}  & \underline{59.4}  \\
          & \hspace{0.6em}$\hookrightarrow$$\boldsymbol{\mathrm{LPO_{rev}}}$ & 84.6  & 51.0    & 42.0    & 47.8  & 64.9  & 91.4  & 26.0    & 53.0    & 17.9  & 35.3  & 47.7  & 56.9 \\
\cmidrule{2-14}          & \multicolumn{1}{l}{\textbf{MaxRL}} & 86.4  & 53.6  & 42.6  & 48.9  & 66.0    & 93.4  & 23.9  & 48.6  & 18.9  & 41.7  & 48.6  & 58.2  \\
          & \hspace{0.6em}$\hookrightarrow$$\boldsymbol{\mathrm{LPO_{fwd}}}$ & 89.4  & 54.5  & 44.8  & 52.3  & 69.0    & 94.5  & 23.9  & 57.6  & 21.3  & 47.8  & \underline{50.5}  & \textbf{63.1}  \\
          & \hspace{0.6em}$\hookrightarrow$$\boldsymbol{\mathrm{LPO_{rev}}}$ & 87.6  & 55.8  & 45.3  & 52.3  & 70.1  & 92.5  & 22.5  & 52.6  & 22.5  & 43.6  & \textbf{50.6}  & \underline{60.3}  \\
    \bottomrule
    \end{tabular}%
  \label{tab:matheval}%
  }
  \vspace{-10pt}
\end{table}%

\subsection{Evaluation Results}
Table~\ref{tab:matheval} presents the final evaluation on mathematics benchmarks, with $k$ configurations following standard practices~\citep{gao2025prompt, qu2025mopps}. 
Furthermore, to assess out-of-distribution~(OOD) generalization, Table~\ref{tab:ood} compares LPO against counterpart PG baselines (all trained on MATH using Qwen3-8B-Base) across general reasoning tasks: MMLU-Pro~\cite{wang2024mmlu}, ARC-c~\cite{clark2018think}, and GPQA-diamond~\cite{rein2024gpqa}.
While specific LPO variants can improve the overall average, OOD evaluation exhibits inherent variance, suggesting multi-domain joint training as a natural direction for future work.

\begin{table}[htbp]
  \centering
  \caption{Out-of-Distribution evaluation of LPO against baseline methods trained on MATH.}
  \vspace{-5pt}
  \label{tab:ood}
  \resizebox{0.5\linewidth}{!}{
  \begin{tabular}{lcccc}
    \toprule
    \multicolumn{1}{l}{\multirow{2}[2]{*}{\textbf{Method}}} & \textbf{ARC-c} & \textbf{MMLU-Pro} & \textbf{GPQA} & \multicolumn{1}{l}{\multirow{2}[2]{*}{\textbf{Avg.$\uparrow$}}} \\
    &       \textbf{Avg@32} & \textbf{Avg@32} & \textbf{Avg@32} &  \\
    \midrule
    GRPO & 33.4 & 56.0 & 25.3 & 38.2 \\
    \hspace{0.6em}$\hookrightarrow$LPO & 38.4 & 53.5 & 23.7 & \textbf{38.5} \\
    \midrule
    Dr.GRPO & 33.2 & 55.4 & 23.8 & 37.5 \\
    \hspace{0.6em}$\hookrightarrow$LPO & 36.4 & 58.5 & 25.6 & \textbf{40.2} \\
    \midrule
    MaxRL & 22.5 & 49.8 & 18.7 & 30.3 \\
    \hspace{0.6em}$\hookrightarrow$LPO & 26.3 & 51.2 & 19.3 & \textbf{32.3} \\
    \bottomrule
  \end{tabular}
  }
\end{table}

\section{Data Examples}
\label{appsec:dataexample}
The prompt templates for MATH and Geometry3k are adopted from the official \texttt{verl} framework; the template for Countdown follows the format introduced in \cite{tinyzero}, and we directly use the prompts for PRIME code from \cite{cui2025prime}.

\begin{tcolorbox}[example, title=MATH example]
\textbf{Prompt:}

The points $(9, -5)$ and $(-3, -1)$ are the endpoints of a diameter of a circle. What is the sum of the coordinates of the center of the circle? Let's think step by step and output the final answer within \texttt{\textbackslash boxed\{\}}.

\textbf{Ground-Truth Answer:}

0
\end{tcolorbox}
\begin{tcolorbox}[example, title=Countdown example]
\textbf{Prompt:}

A conversation between User and Assistant. The user asks a question, and the Assistant solves it. The assistant first thinks about the reasoning process in the mind and then provides the user with the answer.

User: Using the numbers [2, 54, 17], create an equation that equals 35. You can use basic arithmetic operations (+, -, *, /) and each number can only be used once. Show your work in $<$think$>\ </$think$>$ tags. And return the final answer in $<$answer$>\ </$answer$>$ tags, for example $<$answer$> (1 + 2) / 3 <$/answer$>$.

Assistant: Let me solve this step by step.

$<$think$>$

\end{tcolorbox}

\begin{tcolorbox}[example, title=PRIME-Code example]
\textbf{System Prompt:}

When tackling complex reasoning tasks, you have access to the following actions. Use them as needed to progress through your thought process.

[ASSESS] 
[ADVANCE]
[VERIFY]
[SIMPLIFY]
[SYNTHESIZE]
[PIVOT]
[OUTPUT]

You should strictly follow the format below:

[ACTION NAME]

\# Your action step 1

\# Your action step 2

\# Your action step 3

...

Next action: [NEXT ACTION NAME]

\textbf{Prompt}

Problem:
Given a natural number N less than or equal to 12, find the smallest natural number such that the number of divisors is exactly N.

Constraints

* $1\le N \le 12$

Input:
One natural number N is given in one line.

Output:
Output the smallest natural number on a line so that the number of divisors is exactly N.

Examples

Input
1
Output
1
Input
2
Output
2
Input
3
Output
4

Write Python code to solve the problem. Present the code in 

\texttt{python}

\begin{verbatim}
Your code
\end{verbatim}

at the end.

\end{tcolorbox}

\begin{tcolorbox}[example, title=Geometry3k example]
\textbf{Prompt:}

\includegraphics[width=0.22\linewidth]{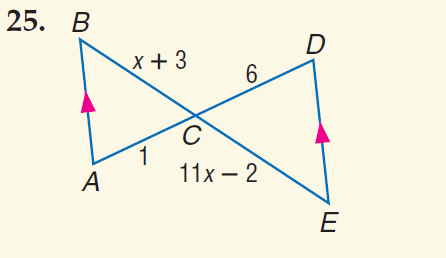}
Find x. You FIRST think about the reasoning process as an internal monologue and then provide the final answer. The reasoning process MUST BE enclosed within $<$think$>\ </$think$>$ tags. The final answer MUST BE put in \texttt{\textbackslash boxed\{\}}.

\textbf{Ground-Truth Answer:}

4
\end{tcolorbox}